\newtheorem{theorem}{Theorem}
\newtheorem{lemma}{Lemma}
\newtheorem{proposition}{Proposition}
\newcommand\vfrac[2]{\ThisStyle{%
  \setbox0=\hbox{$\SavedStyle#1#2$}%
  \setbox2=\hbox{$\SavedStyle X$}%
  \ifdim\ht0>\ht2\setlength{\ht0}{\ht2}\fi%
  #1\mathord{\stretchto{\raisebox{2.3\LMpt}{$\SavedStyle/$}}{\ht0}}#2}}
\DeclareMathOperator{\x}{\mathbf{x}}
\DeclareMathOperator{\y}{\mathbf{y}}
\DeclareMathOperator{\z}{\mathbf{z}}
\DeclareMathOperator{\E}{\mathbb{E}}
\DeclareMathOperator{\D}{\mathcal{D}}
\DeclareMathOperator{\CCC}{\mathcal{C}}
\DeclareMathOperator{\SSS}{\mathcal{S}}
\DeclareMathOperator{\KL}{\D_{KL}}
\DeclareMathOperator{\R}{\mathbb{R}}
\def\smallunderbrace#1{\mathop{\vtop{\m@th\ialign{##\crcr
   $\hfil\displaystyle{#1}\hfil$\crcr
   \noalign{\kern3\p@\nointerlineskip}%
   \tiny\upbracefill\crcr\noalign{\kern3\p@}}}}\limits}
\begin{document}

\twocolumn[
\icmltitle{Neural Autoregressive Flows}

\icmlsetsymbol{equal}{*}

\begin{icmlauthorlist}
\icmlauthor{Chin-Wei Huang}{udem,elem,equal}
\icmlauthor{David Krueger}{udem,elem,equal}
\icmlauthor{Alexandre Lacoste}{elem}
\icmlauthor{Aaron Courville}{udem,cifar}
\end{icmlauthorlist}

\icmlaffiliation{udem}{MILA, University of Montreal}
\icmlaffiliation{elem}{Element AI}
\icmlaffiliation{cifar}{CIFAR fellow}

\icmlcorrespondingauthor{Chin-Wei Huang}{chin-wei.huang@umontreal.ca}
\icmlkeywords{Machine Learning, ICML}
\vskip 0.3in
]

\printAffiliationsAndNotice{\icmlEqualContribution} 

\begin{abstract}
Normalizing flows and autoregressive models have been successfully combined to produce state-of-the-art results in density estimation, via Masked Autoregressive Flows (MAF) \citep{papamakarios2017masked}, and to accelerate state-of-the-art WaveNet-based speech synthesis to 20x faster than real-time \citep{oord2017parallel}, via Inverse Autoregressive Flows (IAF) \citep{kingma2016improved}.
We unify and generalize these approaches, replacing the (conditionally) affine univariate transformations of MAF/IAF with a more general class of invertible univariate transformations expressed as monotonic neural networks.
We demonstrate that the proposed {\bf neural autoregressive flows (NAF)} are universal approximators for continuous probability distributions,
and their greater expressivity allows them to better capture multimodal target distributions.
Experimentally, NAF yields state-of-the-art performance on a suite of density estimation tasks and outperforms IAF in variational autoencoders trained on binarized MNIST. \footnote{Implementation can be found at \href{https://github.com/CW-Huang/NAF/}{https://github.com/CW-Huang/NAF/}}
\end{abstract}


\section{Introduction}
Invertible transformations with a tractable Jacobian, also known as {\bf normalizing flows}, are useful tools in many machine learning problems, for example:
(1) In the context of \textbf{deep generative models}, training necessitates evaluating data samples under the model's inverse transformation \cite{dinh2016density}.
Tractable density is an appealing property for these models, since it allows the objective of interest to be directly optimized; whereas other mainstream methods rely on alternative losses, in the case of intractable density models \cite{kingma2013auto,rezende2014stochastic}, or implicit losses, in the case of adversarial models \cite{goodfellow2014generative}. 
(2) In the context of \textbf{variational inference} \cite{rezende2015variational}, they can be used to improve the variational approximation to the posterior by parameterizing more complex distributions. 
This is important since a poor variational approximation to the posterior can fail to reflect the right amount of \textit{uncertainty},
and/or be biased \cite{turner2011two}, resulting in inaccurate and unreliable predictions. 
We are thus interested in improving techniques for normalizing flows.

Recent work by \citet{kingma2016improved} reinterprets autoregressive models as invertible transformations suitable for constructing normalizing flows. 
The inverse transformation process, unlike sampling from the autoregressive model, is not sequential and thus can be accelerated via parallel computation. 
This allows multiple layers of transformations to be stacked, increasing expressiveness for better variational inference \cite{kingma2016improved} or better density estimation for generative models \cite{papamakarios2017masked}. 
Stacking also makes it possible to improve on the sequential conditional factorization assumed by autoregressive models such as PixelRNN or PixelCNN \cite{oord2016pixel}, and thus define a more flexible joint probability.

We note that the normalizing flow introduced by \citet{kingma2016improved} only applies an affine transformation of each scalar random variable.
Although this transformation is conditioned on preceding variables, the resulting flow can still be susceptible to bad local minima, and thus failure to capture the multimodal shape of a target density; see Figure \ref{fig:4mode_iaf} and \ref{fig:4mode_maf}.  

\subsection{Contributions of this work}
We propose replacing the conditional affine transformation of \citet{kingma2016improved} with a more rich family of transformations, and note the requirements for doing so.
We determine that very general transformations, for instance parametrized by deep neural networks, are possible.
We then propose and evaluate several specific monotonic neural network architectures which are more suited for learning multimodal distributions.
Concretely, our method amounts to using an autoregressive model to output the weights of multiple independent transformer networks, each of which operates on a single random variable, replacing the affine transformations of previous works. 

Empirically, we show that our method works better than the state-of-the-art affine autoregressive flows of \citet{kingma2016improved} and \citet{papamakarios2017masked}, 
both as a sample generator which captures multimodal target densities with higher fidelity, and as a density model which more accurately evaluates the likelihood of data samples drawn from an unknown distribution.

We also demonstrate that our method is a universal approximator on proper distributions in real space, 
which guarantees the expressiveness of the chosen parameterization and supports our empirical findings.

\begin{figure}
\centering
 \includegraphics[width=0.45\textwidth]{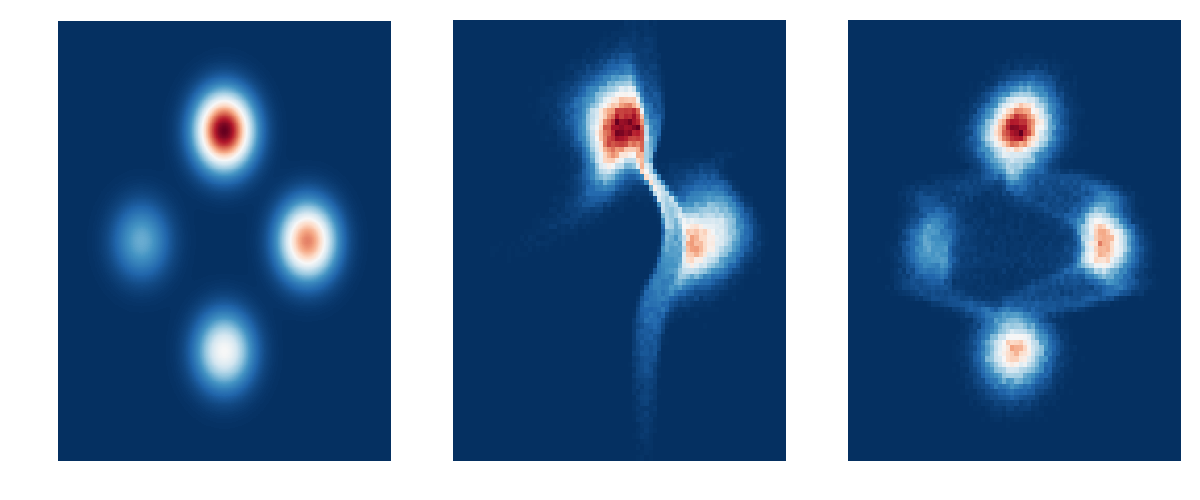}
\caption{Energy function fitting using IAF. \\ Left: true distribution. Center: IAF-affine. Right: IAF-DSF.}
\label{fig:4mode_iaf}
\end{figure}
\begin{figure}
\centering
 \includegraphics[width=0.45\textwidth]{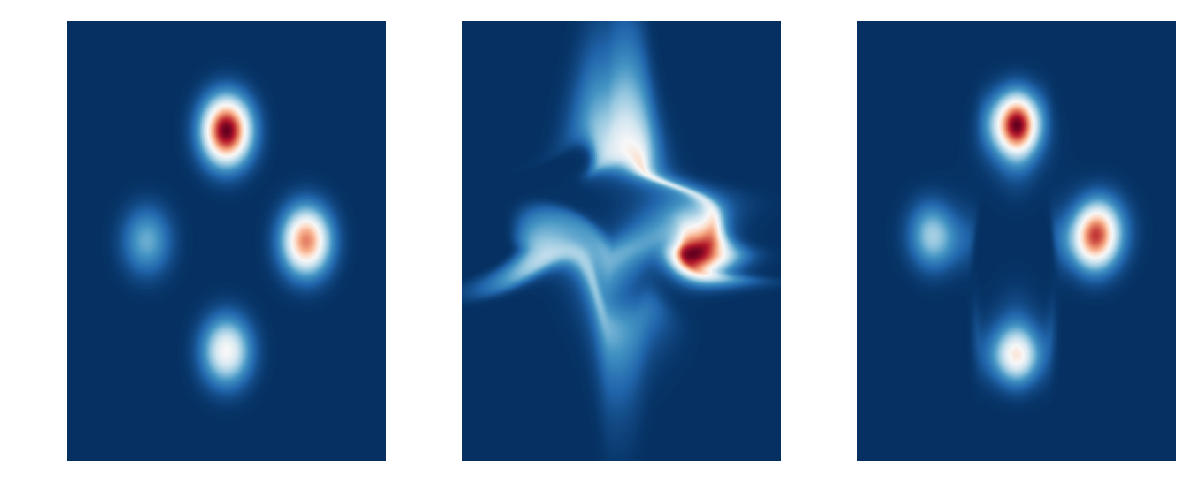}
\caption{Density estimation using MAF. \\ Left: true distribution. Center: MAF-affine. Right: MAF-DSF.}
\label{fig:4mode_maf}
\end{figure}

\section{Background}
\label{sec:background}
A (finite) {\bf normalizing flow (NF)}, or {\bf flow}, is an invertible function $f_\theta: \mathcal{X} \rightarrow \mathcal{Y}$ used to express a transformation between random variables \footnote{
We use $\x$ and $\y$ to denote inputs and outputs of a function, {\it not} the inputs and targets of a supervised learning problem.
}.
Since $f$ is invertible, the change of variables formula can be used to translate between densities $p_Y(\y)$ and $p_X(\x)$:
\begin{align}
p_Y(\y) = \left| \frac{\partial f(\x)}{\partial \x} \right|^{-1} p_X(\x)
\label{eq:changeovar}
\end{align}	
The determinant of $f$'s Jacobian appears on the right hand side to account for the way in which $f$ can (locally) expand or contract regions of $X$, thereby lowering or raising the resulting density in those regions' images in $Y$.
Since the composition of invertible functions is itself invertible, complex NFs are often formed via function composition (or ``stacking'') of simpler NFs.

Normalizing flows are most commonly trained to produce an output distribution $p_{Y}(\y)$ which matches a target distribution (or, more generally, energy function) $p_{\mathrm{target}}(\y)$ as measured by the KL-divergence $KL(p_Y(\y) || p_{\mathrm{target}}(\y))$.
When $X$ or $Y$ is distributed by some simple distribution, such as uniform or standard normal, we call it an unstructured noise; and we call it a structured noise when the distribution is complex and correlated.
Two common settings are maximum likelihood and variational inference.
Note that these two settings are typically viewed as optimizing different directions of the KL-divergence, whereas we provide a unified view in terms of different input and target distributions. A detailed derivation is presented in the appendix (See Section~\ref{sec:KLnMLE}).

For maximum likelihood applications \citep{dinh2016density,papamakarios2017masked}, $p_{\mathrm{target}}(\y)$ is typically a simple prior over latent variable $\y$, and $f$ attempts to disentangle the complex empirical distribution of the data, $p_X(\x)$ into a simple latent representation $p_Y(\y)$ matching the prior (\textit{ structured to unstructured}) \footnote{
It may also be possible to form a generative model from such a flow, by passing samples from the prior $p_{\mathrm{target}}(\y)$ through $f^{-1}$, although the cost of doing so may vary.
For example, RealNVP \citep{dinh2016density} was devised as a generative model, and its inverse computation is as cheap as its forward computation, whereas MAF \citep{papamakarios2017masked} is designed for density estimation and is much more expensive to sample from.
For the NAF architectures we employ, we do not have an analytic expression for $f^{-1}$, but it is possible to approximate it numerically.}.

In a typical application of variational inference \citep{rezende2015variational,kingma2016improved}, $p_{\mathrm{target}}(\y)$ is a complex posterior over latent variables $\y$, and $f$ transforms a simple input distribution (for instance a standard normal distribution) over $\x$ into a complex approximate posterior $p_Y(\y)$ ({\textit unstructured to structured}). 
In either case, since $p_X$ does not depend on $\theta$, the gradients of the KL-divergence are typically estimated by Monte Carlo:
\begin{align}
\nabla_\theta \mathcal{D}_{KL}\big(p_Y(\y) &|| p_{\mathrm{target}}(\y)\big) \nonumber\\
&= \nabla_\theta \int_\mathcal{Y} p_Y(\y) \log\frac{p_Y(\y)}{p_{\mathrm{target}}(\y)}  \mathrm{d}\y \nonumber \\
&= \int_\mathcal{X} p_X(\x) \nabla_\theta \log\frac{p_Y(\y)}{p_{\mathrm{target}}(\y)}  \mathrm{d}\x 
\end{align}
Applying the change of variables formula from Equation 1 to the right hand side of Equation 2 yields:
\begin{align}
\E_{
\begin{subarray}{l}
\x \sim p_X(\x) \\
\y = f_\theta(\x)
\end{subarray}
}
\left[ \nabla_\theta \log \left| \frac{\partial f_\theta(\x)}{\partial \x} \right|^{-1} p_X(\x) - \nabla_\theta \log {p_{\mathrm{target}}(\y)} \right]
\end{align}
Thus for efficient training, the following operations must be tractable and cheap:
\begin{enumerate}
\item Sampling $\x \sim p_X(\x)$
\item Computing $\y = f(\x)$
\item Computing the gradient of the log-likelihood of $\y = f(\x);\; \x \sim p_X(\x)$ under both $p_Y(\y)$ and $p_\mathrm{target}(\y)$
\item Computing the gradient of the log-determinant of the Jacobian of $f$
\end{enumerate}
Research on constructing NFs, such as our work, focuses on finding ways to parametrize flows which meet the above requirements while being maximally flexible in terms of the transformations which they can represent.
Note that some of the terms of of Equation 3 may be constant with respect to $\theta$ \footnote{ There might be some other parameters other than $\theta$ that are learnable, such as parameters of $p_X$ and $p_{target}$ in the variational inference and maximum likelihood settings, respectively.} and thus trivial to differentiate, such as $p_X(\x)$ in the maximum likelihood setting.

\begin{figure}
\centering
 \includegraphics[width=0.47\textwidth]{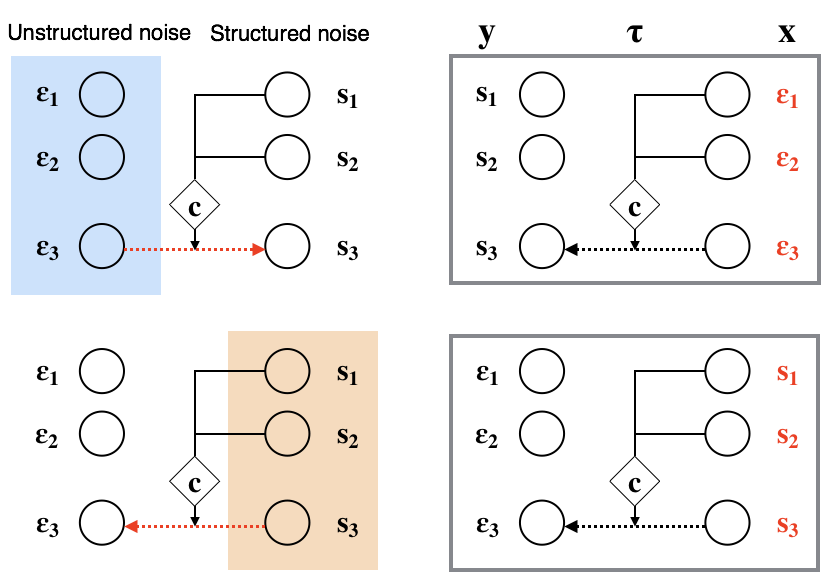}
\caption{
Difference between autoregressive and inverse autoregressive transformations (left), and between IAF and MAF (right).
{\bf Upper left}: sample generation of an autoregressive model. Unstructured noise is transformed into structured noise.
{\bf Lower left}: inverse autoregressive transformation of structured data.  Structured variables are transformed into unstructured variables.
{\bf Upper right}: IAF-style sampling. 
{\bf Lower right}: MAF-style evaluation of structured data. 
$\epsilon$ represents unstructured noise and $s$ represents structured noise. 
}
\end{figure}

{\bf Affine autoregressive flows (AAFs)} \footnote{
Our terminology differs from previous works, and hence holds the potential for confusion, but we believe it is apt.
Under our unifying perspective, NAF, IAF, AF, and MAF all make use of the same principle, which is an invertible transformer conditioned on the outputs of an autoregressive (and emphatically {\it not} an {\it inverse} autoregressive) conditioner.
}, 
such as inverse autoregressive flows (IAF) \citep{kingma2016improved}, are one particularly successful pre-existing approach.
Affine autoregressive flows yield a triangular Jacobian matrix, so that the log-determinant can be computed in linear time, as the sum of the diagonal entries on log scale.
In AAFs, the components of $\x$ and $\y$ are given an order (which may be chosen arbitrarily), and $y_t$ is computed as a function of $x_{1:t}$.
Specifically, this function can be decomposed via an autoregressive {\bf conditioner}, $c$, and an invertible {\bf transformer}, $\tau$, as \footnotemark:
\footnotetext{ \citet{Dinh2014} use $m$ and $g^{-1}$ to denote $c$ and $\tau$, and refer to them as the ``coupling function'' and ``coupling law'', respectively.}
\begin{align}
y_t \doteq f(x_{1:t}) = \tau(c(x_{1:t-1}), x_t)
\label{eq:f}
\end{align}
It is possible to efficiently compute the output of $c$ for all $t$ in a single forward pass using a model such as {\bf MADE} \citep{MADE}, as pointed out by \citet{kingma2016improved}.

In previous work, 
$\tau$ is taken to be an affine transformation with parameters $\mu \in \R, \sigma > 0$ output from $c$. For instance 
\citet{dinh2016density} use:
\begin{align}
\tau(\mu, \sigma, x_t) = \mu + \sigma x_t
\end{align}
with $\sigma$ produced by an exponential nonlinearity. \citet{kingma2016improved} use:
\begin{align}
\tau(\mu, \sigma, x_t) = \sigma x_t + (1-\sigma) \mu
\end{align}
with $\sigma$ produced by a sigmoid nonlinearity.
Such transformers are trivially invertible, but their relative simplicity also means that the expressivity of $f$ comes entirely from the complexity of $c$ and from stacking multiple AAFs (potentially using different orderings of the variables) \footnote{
Permuting the order of variables is itself a normalizing flow that does not expand or contract space and can be inverted by another permutation.
}.
However, the only requirements on $\tau$ are:
\begin{enumerate}
\item The transformer $\tau$ must be invertible as a function of $x_t$.
\item $\frac{d y_t}{d x_{t}}$ must be cheap to compute. 
\end{enumerate} 
This raises the possibility of using a more powerful transformer in order to increase the expressivity of the flow.

\section{Neural Autoregressive Flows}

\begin{figure}[th]
\centering
\subfigure[Neural autoregressive flows (NAF)]{
\centering
\includegraphics[width=0.46\textwidth]{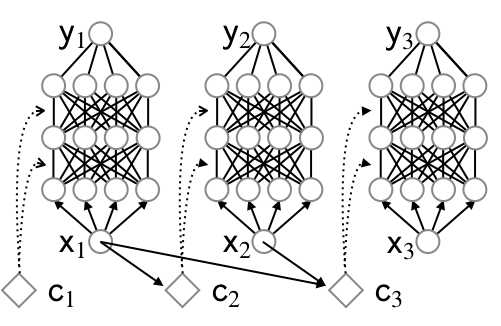}
\label{fig:graph_naf}}\\
\subfigure[DSF]{
\centering
\includegraphics[width=0.2\textwidth]{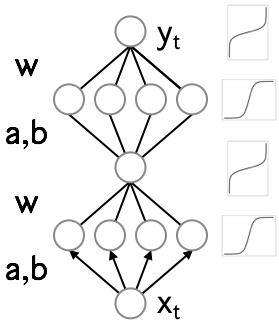}
\label{fig:graph_dsf}}
\subfigure[DDSF]{
\centering
\includegraphics[width=0.2\textwidth]{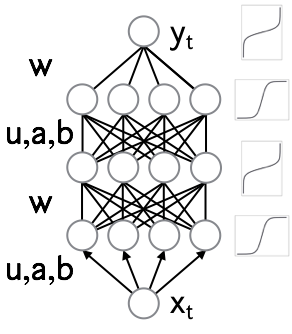}
\label{fig:graph_ddsf}}
\caption{
{\bf Top:} In neural autoregressive flows, the transformation of the current input variable is performed by an MLP whose parameters are output from 
an autoregressive conditioner model, $c_t \doteq c(x_{1:t-1})$, which incorporates information from previous input variables. 
\hspace{1mm}
{\bf Bottom:} The architectures we use in this work: deep sigmoidal flows (DSF) and deep dense sigmoidal flows (DDSF).
See section \ref{subsec:arch} for details.
}
\label{fig:arch}
\end{figure}

\begin{figure}
 \centering
 \includegraphics[width=0.9\linewidth]{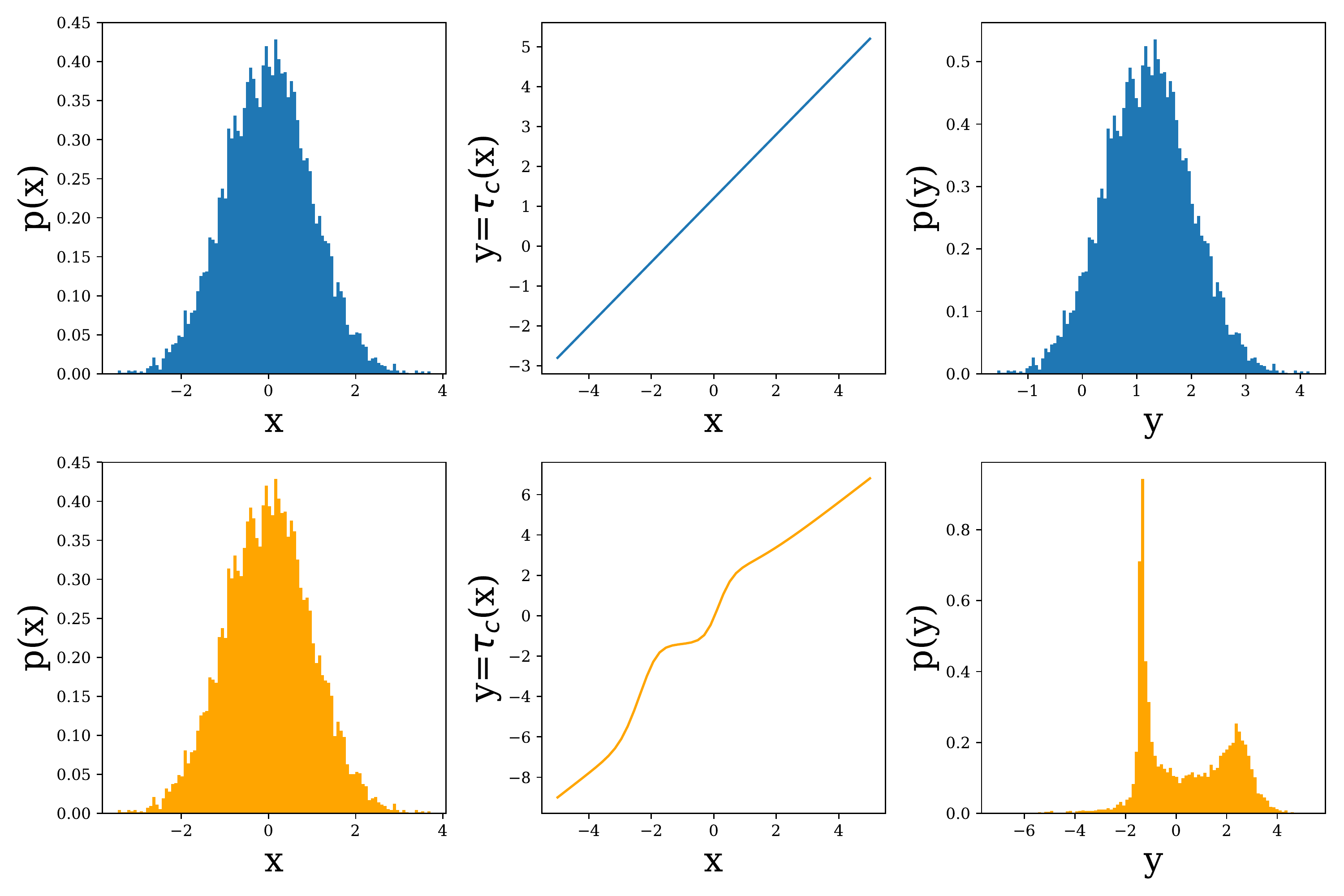}%
 \caption{{\small
	Illustration of the effects of traditional IAF (top), and our proposed NAF (bottom).
	Areas where the slope of the transformer $\tau_c$ is greater/less than 1, are compressed/expanded (respectively) in the output distribution.
	Inflection points in $\tau_c(x_t)$ (middle) can transform a unimodal $p(x_t)$ (left) into a multimodal $p(y_t)$ (right); NAF allows for such inflection points, whereas IAF does not.
}}
  \label{fig:illustrate}
\end{figure}

We propose replacing the affine transformer used in previous works with a neural network, yielding a more rich family of distributions with only a minor increase in computation and memory requirements.
Specifically, 
\begin{align}
\tau(c(x_{1:t-1}),x_t) = \mathrm{DNN}(x_t;\; \phi&=c(x_{1:t-1}))
\end{align}
is a deep neural network which takes the scalar $x_t$ as input and produces $y_t$ as output, and its weights and biases are given by the outputs of $c(x_{1:t-1}) $\footnote{ We'll sometimes write $\tau_c$ for $\tau(c(x_{1:t-1}), \cdot)$.} (see Figure \ref{fig:graph_naf}).
We refer to these values $\phi$ as {\bf pseudo-parameters}, in order to distinguish them from the statistical parameters of the model.

We now state the condition for NAF to be strictly monotonic, and thus invertible (as per requirement 1):
\begin{proposition}
Using strictly positive weights and strictly monotonic activation functions for $\tau_c$ is sufficient for the entire network to be strictly monotonic. 
\label{proposition:mono}
\end{proposition}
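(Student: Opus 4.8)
The plan is to prove this by induction on the depth of the MLP $\tau_c$, using two elementary closure properties of strictly increasing functions of a real variable: (i) a positive linear combination, plus an arbitrary constant bias, of strictly increasing functions is strictly increasing; and (ii) a composition of strictly increasing functions is strictly increasing. Throughout I will take the activation functions to be strictly increasing (the typical case, e.g. sigmoids); the case of strictly decreasing activations only adds sign bookkeeping and is not the regime of interest. The conclusion will in fact be the stronger statement that $\tau_c$ is strictly increasing in $x_t$, which in particular makes it invertible as a function of $x_t$, as demanded by requirement 1.

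First I would fix notation: write the network as $h^{(0)} = x_t$ and $h^{(\ell)} = a^{(\ell)}(W^{(\ell)} h^{(\ell-1)} + b^{(\ell)})$ for $\ell = 1, \dots, L$, with $y_t = h^{(L)}$, where every entry of each $W^{(\ell)}$ is a strictly positive pseudo-parameter, the $b^{(\ell)}$ are arbitrary pseudo-parameters, and each $a^{(\ell)}$ acts coordinatewise and is strictly increasing. The induction hypothesis at level $\ell$ is: every coordinate of $h^{(\ell)}$, viewed as a function of the scalar $x_t$ (with all pseudo-parameters held fixed), is strictly increasing. The base case $\ell = 0$ is immediate since $h^{(0)} = x_t$ is trivially strictly increasing in $x_t$.

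For the inductive step, assume the hypothesis at level $\ell-1$ and fix a unit $i$ of layer $\ell$. Its pre-activation is $z_i = \sum_j W^{(\ell)}_{ij} h^{(\ell-1)}_j + b^{(\ell)}_i$; since each coefficient $W^{(\ell)}_{ij}$ is strictly positive and each $h^{(\ell-1)}_j$ is strictly increasing by hypothesis, property (i) gives that $z_i$ is strictly increasing in $x_t$ (the constant $b^{(\ell)}_i$ is harmless). Applying the strictly increasing activation $a^{(\ell)}$ and property (ii) shows $h^{(\ell)}_i = a^{(\ell)}(z_i)$ is strictly increasing, completing the step. Taking $\ell = L$ yields that $y_t$ is a strictly increasing function of $x_t$; being continuous and strictly monotonic on an interval, it is a bijection onto its image and hence invertible.

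I do not expect any genuine obstacle — the argument is a routine induction. The only points that need care are (a) insisting on strict positivity of the weights rather than mere nonnegativity, since a zero weight could let a layer discard the dependence on $x_t$ and break strictness; and (b) observing that the biases, although unconstrained, are merely additive constants in $x_t$ and so cannot affect monotonicity. Should one wish to also permit strictly decreasing activations, the only change is to track the parity of the number of decreasing activations along each path; positive weights preserve the direction of monotonicity, so no such complication arises when all activations are increasing.
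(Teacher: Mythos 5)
Your proof is correct and follows essentially the same route as the paper's: an induction on the depth of the MLP, where the inductive step combines the facts that a strictly-positively-weighted sum of strictly increasing functions is strictly increasing and that composition with a strictly increasing activation preserves this. The only cosmetic difference is that the paper phrases monotonicity via positivity of derivatives while you argue directly from the order-preserving definition; the substance is identical.
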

Meanwhile, $\frac{d y_t}{d x_t}$ and gradients wrt the pseudo-parameters \footnote{
Gradients for pseudo-parameters are backpropagated through the conditioner, $c$, in order to train its parameters.
} 
can all be computed efficiently via backpropagation (as per requirement 2).

Whereas affine transformers require information about multimodality in $y_t$ to flow through $x_{1:t-1}$, our {\bf neural autoregressive flows (NAFs)} are able to induce multimodality more naturally, via inflection points in $\tau_c$, as shown in Figure~\ref{fig:illustrate}. 
Intuitively, $\tau_c$ can be viewed as analogous to a cumulative distribution function (CDF), so that its derivative corresponds to a PDF, where its inflection points yield local maxima or minima.

\subsection{Transformer Architectures} 
\label{subsec:arch}

In this work, we use two specific architectures for $\tau_c$, which we refer to as 
{\bf deep sigmoidal flows (DSF)} and {\bf deep dense sigmoidal flows (DDSF)} (see Figure~\ref{fig:graph_dsf},~\ref{fig:graph_ddsf} for an illustration).
We find that small neural network transformers of 1 or 2 hidden layers with 8 or 16 sigmoid units perform well across our experiments, although there are other possibilities worth exploring (see Section~\ref{subsec:alternatives}).
Sigmoids contain inflection points, and so can easily induce inflection points in $\tau_c$, and thus multimodality in $p(y_t)$.
We begin by describing the DSF transformation, which is already sufficiently expressive to form a universal approximator for probability distributions, as 
we prove in section \ref{sec:universal}.

The DSF transformation resembles an MLP with a single hidden layer of sigmoid units.
Naive use of sigmoid activation functions would restrict the range of $\tau_c$, however, and result in a model that assigns 0 density to sufficiently large or small $y_t$, which is problematic when $y_t$ can take on arbitrary real values.
We address this issue by applying the inverse sigmoid (or ``logit'') function at the output layer.
To ensure that the output's preactivation is in the domain of the logit (that is, $(0,1)$), we combine the output of the sigmoid units via an attention-like \citep{bahdanau2014neural} softmax-weighted sums:
\begin{equation}
y_t = \sigma^{-1}(\smallunderbrace{w^T}_{1\times d}\cdot\sigma(\smallunderbrace{a}_{d\times1} \cdot \smallunderbrace{x_t}_{1\times1}  +  {\smallunderbrace{b}_{d\times1}})))
\label{eq:sf}
\end{equation}
where $0<w_{i,j}<1$, $\sum_i w_{i,j}=1$, $a_{s,t}>0$, and $d$ denotes the number of hidden units \footnote{
Constraints on the variables are enforced via activation functions; $w$ and $a$ are outputs of a $\mathrm{softmax}$, and $\mathrm{softplus}$ or $\mathrm{exp}$, respectively.
}.

Since all of the sigmoid activations are bounded between 0 and 1, the final preactivation (which is their convex combination) is as well.
The complete DSF transformation can be seen as mapping the original random variable to a different space through an activation function, where doing affine/linear operations is non-linear with respect to the variable in the original space, and then mapping it back to the original space through the inverse activation.

When stacking multiple sigmoidal transformation, we realize it resembles an MLP with bottleneck as shown by the bottom left of Figure \ref{fig:arch}. 
A more general alternative is the {\bf deep dense sigmoidal flow (DDSF)}, which takes the form of a fully connected MLP:
\begin{align}
&h^{(l+1)} = \nonumber\\
&\sigma^{-1}(\smallunderbrace{w^{(l+1)}}_{d_{l+1}\times d_{l+1}} \cdot\, \sigma( \smallunderbrace{a^{(l+1)}}_{d_{l+1}} \odot \smallunderbrace{u^{(l+1)}}_{d_{l+1}\times d_{l}} \cdot \smallunderbrace{h^{(l)}}_{d_{l}} + {\smallunderbrace{b^{(1+1)}}_{d_{l+1}}} ))
\label{eq:ddsf}
\end{align}
for $1\leq l\leq L$ where $h_0=x$ and $y=h_L$; $d_0=d_L=1$.
We also require $\sum_j w_{ij}=1$, $\sum_ju_{kj}=1$ for all $i,k$, and all parameters except $b$ to be positive. 

We use either DSF (Equation \ref{eq:sf}) or DDSF (Equation \ref{eq:ddsf}) to define the transformer function $\tau$ in Equation \ref{eq:f}. 
To compute the log-determinant of Jacobian in a numerically stable way,
we need to apply log-sum-exp to the chain rule
\begin{align}
\nabla_xy=
\left[\nabla_{h^{(L-1)}}h^{(L)}\right]
\left[\nabla_{h^{(L-2)}}h^{(L-1)}\right]
, \cdots, 
\left[\nabla_{h^{(0)}}h^{(1)}\right]
\end{align}
We elaborate more on the numerical stability in parameterization and computation of logarithmic operations in the supplementary materials.

\subsection{Efficient Parametrization of Larger Transformers}
\label{subsec:efficient}
Multi-layer NAFs, such as DDSF, require $c$ to output $\mathcal{O}(d^2)$ pseudo-parameters, where $d$ is the number of hidden units in each layer of $\tau$.
As this is impractical for large $d$, we propose parametrizing $\tau$ with $\mathcal{O}(d^2)$ statistical parameters, but only $\mathcal{O}(d)$ pseudo-parameters which modulate the computation on a per-unit basis, using a technique such as conditional batch-normalization (CBN) \citep{Dumoulin2016}.
Such an approach also makes it possible to use minibatch-style matrix-matrix products for the forward and backwards propagation through the graph of $\tau_c$.
In particular, we use a technique similar to {\it conditional weight normalization (CWN)} \citep{krueger2017bayesian} in our experiments with DDSF; see appendix for details.

\subsection{Possibilities for Alternative Architectures}
\label{subsec:alternatives}
While the DSF and DDSF architectures performed well in our experiments, there are many alternatives to be explored.
One possibility is using other (strictly) monotonic activation functions in $\tau_c$, such as leaky ReLUs (LReLU) \citep{xu2015empirical} or ELUs \citep{clevert2015fast}.
Leaky ReLUs in particular are bijections on $\R$ and so would not require the softmax-weighted summation and activation function
inversion tricks discussed in the previous section.

Finally, we emphasize that in general, $\tau$ need not be expressed as a neural architecture; it only needs to satisfy the requirements of invertibility and differentiability given at the end of section 2.

\section{NAFs are Universal Density Approximators}
\label{sec:universal}

In this section, we prove that NAFs (specifically DSF) can be used to approximate any probability distribution over real vectors arbitrarily well, given that $\tau_c$ has enough hidden units output by generic neural networks with autoregressive conditioning. 
Ours is the first such result we are aware of for finite normalizing flows.

Our result builds on the work of \citet{huang2017learnable}, who demonstrate the general universal representational capability of inverse autoregressive transformations parameterized by an autoregressive neural network (that transform uniform random variables into any random variables in reals).
However, we note that their proposition is weaker than we require, as there are no constraints on the parameterization of the transformer $\tau$, whereas we've constrained $\tau$ to have strictly positive weights and monotonic activation functions, to ensure it is invertible throughout training.

The idea of proving the universal approximation theorem for DSF (1) in the IAF direction (which transforms unstructured random variables into structured random variables) resembles the concept of the {\bf inverse transform sampling}: we first draw a sample from a simple distribution, such as uniform distribution, and then pass the sample though DSF. 
If DSF converges to any inverse conditional CDF, the resulting random variable then converges in distribution to any target random variable as long as the latter has positive continuous probability density everywhere in the reals. 
(2) For the MAF direction, DSF serves as a solution to the non-linear independent component analysis problem~\cite{hyvarinen1999nonlinear}, which disentangles structured random variables into uniformly and independently distributed random variables. 
(3) Combining the two, we further show that DSF can transform any structured noise variable into a random variable with any desired distribution.

We define the following notation for the pre-logit of the DSF transformation (compare equation \ref{eq:sf}): 
\begin{equation}
\SSS(x_t,\CCC(x_{1:t-1})) = \sum_{j=1}^{n}w_j(x_{1:t-1})\cdot\sigma\left(\frac{x_t-b_j(x_{1:t-1})}{\tau_j(x_{1:t-1})}\right)
\label{eq:prelogit-dsf}
\end{equation}
where $\mathcal{C}=(w_j,b_j,\tau_j)_{j=1}^n$ are functions of $x_{1:1-t}$ parameterized by neural networks. 
Let $b_j$ be in $(r_0,r_1)$; $\tau_j$ be bounded and positive; $\sum_{j=1}^nw_j=1$ and $w_j>0$. 
See Appendix~\ref{sec:proof_of_lemmas}~and~\ref{sec:main_proof}  for the proof.

\vspace{5pt}

\begin{proposition}
\label{theorem:unstruct2struct}
\textnormal{(DSF universally transforms uniform random variables into any desired random variables)} Let $Y$ be a random vector in $\R^m$ and assume $Y$ has a strictly positive and continuous probability density distribution.
Let $X\sim\textnormal{Unif}((0,1)^m)$. 
Then there exists a sequence of functions $(G_n)_{n\geq1}$ parameterized by autoregressive neural networks in the following form 
\begin{align}
G(\x)_t = \sigma^{-1}\left(\SSS\left(x_t;\CCC_{t}(x_{1:t-1})\right)\right) 
\end{align}
where $\CCC_{t}=(a_{tj},b_{tj},\tau_{tj})_{j=1}^n$ are functions of $x_{1:t-1}$, such that $Y_n \doteq G_n(X)$ converges in distribution to $Y$.
\end{proposition}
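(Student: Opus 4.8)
The plan is to reduce the statement to a one-dimensional approximation problem through the inverse-transform (Knothe--Rosenblatt) construction, and then lift it back up through the autoregressive structure. Since $Y$ has a strictly positive continuous density on $\R^m$, each conditional CDF $F_t(\cdot\mid y_{1:t-1})$ of $Y_t$ given $Y_{1:t-1}=y_{1:t-1}$ is a strictly increasing continuous bijection $\R\to(0,1)$, jointly continuous in all of its arguments. Define the triangular target map $T\colon(0,1)^m\to\R^m$ recursively by $T(\x)_t \doteq F_t^{-1}\!\big(x_t\mid T(\x)_{1:t-1}\big)$; it is classical that $T(X)\overset{d}{=}Y$ when $X\sim\mathrm{Unif}((0,1)^m)$. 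Because $T$ is triangular, $T(\x)_{1:t-1}$ depends on $\x$ only through $x_{1:t-1}$, so we may write $T(\x)_t=\Psi_t(x_t;x_{1:t-1})$ where $\Psi_t$ is strictly increasing and finite in $x_t$ and jointly continuous on $(0,1)^t$. The goal is then to approximate each $\Psi_t$ by $\sigma^{-1}\!\big(\SSS(\,\cdot\,;\CCC_t(x_{1:t-1}))\big)$, well enough and in a way compatible with a neural conditioner.

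The crux is a one-dimensional lemma: given a strictly increasing continuous $\psi\colon[\delta,1-\delta]\to\R$ and $\varepsilon>0$, there exist $n$, weights $w_j>0$ with $\sum_jw_j=1$, biases $b_j\in(r_0,r_1)$, and bounded positive scales $\tau_j$ with $\sup_{x\in[\delta,1-\delta]}\big|\sigma^{-1}(\SSS(x))-\psi(x)\big|<\varepsilon$. It suffices to approximate $h\doteq\sigma\circ\psi\colon[\delta,1-\delta]\to(0,1)$, whose range is a compact subinterval of $(0,1)$, by the logistic-mixture CDF $\SSS=\sum_jw_j\,\sigma\!\big((\cdot-b_j)/\tau_j\big)$: sandwich $h$ between staircases with $N$ jumps of total mass $1$ at suitable points in $(r_0,r_1)$, replace each jump by a steep logistic sigmoid of matching weight, and let the scales shrink; this is the standard fact that mixtures of a location family are uniformly dense on compacts in the class of continuous CDFs (we shrink $\delta$ if needed so that $(r_0,r_1)$ comfortably contains the required locations, including some on either side of $[\delta,1-\delta]$). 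Since $\sigma^{-1}$ is Lipschitz on compact subsets of $(0,1)$, uniform approximation of $h$ transfers to uniform approximation of $\psi$. Next I would make this uniform in the conditioning parameter $z=x_{1:t-1}$: as $\Psi_t$ is jointly continuous and any $K\subset(0,1)^{t-1}$ compact, a finite cover of $K$ together with continuous interpolation of these constructions yields a \emph{continuous} map $z\mapsto(w_j(z),b_j(z),\tau_j(z))$ obeying all constraints with $\big|\sigma^{-1}(\SSS(x_t;z))-\Psi_t(x_t;z)\big|<\varepsilon$ on $K\times[\delta,1-\delta]$. Finally, universal approximation for feedforward networks replaces this continuous map by an actual neural conditioner $\CCC_t$, the output activations ($\mathrm{softmax}$ for the $w_j$, $\mathrm{softplus}$ or $\exp$ for the $\tau_j$, a bounded squashing for the $b_j$) enforcing the constraints automatically, and a small enough network error keeps the overall approximation within $2\varepsilon$.

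I would then assemble $G_n$ by a diagonal argument: for coordinate $t$ take the conditioner above with $\varepsilon=1/n$ on the compacts $[1/n,1-1/n]^{t-1}$ and $[1/n,1-1/n]$, and set $G_n(\x)_t=\sigma^{-1}\!\big(\SSS(x_t;\CCC_{t,n}(x_{1:t-1}))\big)$. Crucially, the $t$-th output of both $G_n$ and $T$ depends on $\x$ only through $x_{1:t}$ (not through the earlier outputs), so the per-coordinate errors do not compound: on the event $\{X\in[1/n,1-1/n]^m\}$, whose probability tends to $1$ under $X\sim\mathrm{Unif}((0,1)^m)$, we have $\|G_n(X)-T(X)\|_\infty\le 1/n$. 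Hence $G_n(X)\to T(X)$ in probability, so $G_n(X)$ converges in distribution to $T(X)$, which has the law of $Y$.

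The main obstacle is the one-dimensional lemma together with its uniform-in-$z$ upgrade: showing that the \emph{constrained} logistic mixtures ($w_j>0$, $\sum_jw_j=1$, $b_j\in(r_0,r_1)$, $\tau_j$ bounded and positive) are uniformly dense on compacts in the relevant class of conditional quantile functions, uniformly over the conditioning variables so that the induced parameter map is continuous and hence realizable by a network. Secondary care is needed near the endpoints: the conditional quantile function diverges as $x_t\to0,1$ and $\sigma^{-1}$ loses uniform continuity there, which is why the approximation is claimed only on $[\delta,1-\delta]$ and the residual $\mathbb{P}(X\notin[1/n,1-1/n]^m)$ is absorbed into the ``probability tends to $1$'' event. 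The remaining ingredients --- the identity $T(X)\overset{d}{=}Y$, ``convergence in probability implies convergence in distribution'', and universal approximation for the conditioner --- are standard.
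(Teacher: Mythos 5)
Your proposal is correct and follows essentially the same route as the paper: both reduce the statement to the inverse conditional CDF (Knothe--Rosenblatt) map, approximate it coordinatewise by constrained logistic mixtures composed with $\sigma^{-1}$ (the paper's Lemmas 1--3), realize the continuous parameter maps with a neural conditioner via the classical universal approximation theorem, and finish with a convergence-in-distribution argument. The only difference is in the tail handling: the paper approximates the pre-logit $\sigma\circ G$ uniformly on all of $[0,1]$ (using that it extends continuously with boundary values $0$ and $1$) and then deduces pointwise convergence of $G_n=\sigma^{-1}\circ\SSS_n$ followed by a dominated-convergence/Portmanteau lemma, whereas you exhaust $(0,1)^m$ by compacts and pass through convergence in probability --- both are valid ways of dealing with the divergence of the quantile function near the endpoints.
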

\begin{proposition}
\label{theorem:struct2unstruct}
\textnormal{(DSF universally transforms any random variables into uniformly distributed random variables)} Let $X$ be a random vector in an open set $\mathcal{U}\subset\R^m$. 
Assume $X$ has a positive and continuous probability density distribution. 
Let $Y\sim\textnormal{Unif}((0,1)^m)$.
Then there exists a sequence of functions $(H_n)_{n\geq1}$ parameterized by autoregressive neural networks in the following form
\begin{align}
H(\x)_t = \SSS\left(x_t;\CCC_{t}(x_{1:t-1})\right) 
\end{align}
where $\CCC_{t}=(a_{tj},b_{tj},\tau_{tj})_{j=1}^n$ are functions of $x_{1:t-1}$, such that $Y_n \doteq H_n(X)$ converges in distribution to $Y$.\end{proposition}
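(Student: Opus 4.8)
The plan is to use the iterated probability integral transform (the Rosenblatt transformation) as the exact ``target'' map, and then show that autoregressive DSF networks can approximate it well enough to force convergence in distribution. Define $R(\x)_t \doteq F_{X_t \mid X_{1:t-1}}(x_t \mid x_{1:t-1})$ for $t=1,\dots,m$. Since $X$ has a strictly positive continuous density on the open set $\mathcal{U}$, for each fixed $x_{1:t-1}$ in the (open) support of $X_{1:t-1}$ the map $x_t \mapsto R(\x)_t$ is continuous and strictly increasing on the corresponding slice of $\mathcal{U}$ with range $(0,1)$, and $x_{1:t-1} \mapsto F_{X_t\mid X_{1:t-1}}(\cdot\mid x_{1:t-1})$ depends continuously on the conditioning variables. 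A classical fact then gives $R(X) \sim \mathrm{Unif}((0,1)^m)$, so it suffices to construct $H_n$ of the prescribed form with $H_n \to R$ in a mode strong enough to transfer to the laws of the images.

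The first ingredient is a one-dimensional approximation lemma (Appendix \ref{sec:proof_of_lemmas}): every continuous strictly increasing function from an interval into $(0,1)$ can be approximated uniformly on compact subsets by pre-logit DSF maps $x \mapsto \SSS(x;\CCC) = \sum_{j=1}^n w_j\, \sigma\!\big((x-b_j)/\tau_j\big)$ obeying the stated constraints ($w_j>0$, $\sum_j w_j = 1$, $b_j \in (r_0,r_1)$, $\tau_j$ bounded positive), by taking $n$ large. This is morally ``universal approximation for monotone functions by mixtures of sigmoids''; the boundary behaviour is automatic because $\SSS(\cdot;\CCC)$ is itself a genuine continuous strictly increasing CDF on all of $\R$, tending to $0$ as $x\to-\infty$ and to $1$ as $x\to+\infty$. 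The second ingredient is the standard universal approximation theorem for neural networks, used to realise the conditioner $x_{1:t-1} \mapsto \CCC_t$ that emits the needed pseudo-parameters; combined with local Lipschitz continuity of $\SSS$ in $\CCC$, this lets us approximate $x_t \mapsto F_{X_t\mid X_{1:t-1}}(x_t\mid x_{1:t-1})$ uniformly as $x_{1:t-1}$ ranges over a compact subset of the support.

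Assembling these, for any compact $K \subset \mathcal{U}$ and any $\epsilon>0$ I would build an $H$ of the required autoregressive DSF form with $\sup_{\x \in K}\lVert H(\x) - R(\x)\rVert < \epsilon$, treating the coordinates $t=1,\dots,m$ one at a time (each conditioner sees only $x_{1:t-1}$, consistent with the autoregressive structure). Choosing an exhaustion $K_1 \subset K_2 \subset \cdots$ with $\bigcup_i K_i = \mathcal{U}$ and taking $H_n$ with $\sup_{K_n}\lVert H_n - R\rVert < 1/n$ yields $H_n(\x) \to R(\x)$ for every $\x \in \mathcal{U}$. Since $X$ has a density, $P(X \in \mathcal{U}) = 1$, so $H_n(X) \to R(X)$ almost surely and hence in distribution; as $R(X) \sim \mathrm{Unif}((0,1)^m) = Y$, the claim follows.

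I expect the main obstacle to be the second ingredient: controlling the approximation \emph{uniformly in the conditioning variables} while respecting the DSF constraints. One must argue that the family $\{F_{X_t\mid X_{1:t-1}}(\cdot\mid x_{1:t-1})\}$ over a compact set of $x_{1:t-1}$ is sufficiently regular (an equicontinuous family of strictly increasing functions with uniformly controlled range) that a single finite $n$ and a single neural-network conditioner suffice for a given $\epsilon$, and that conditional CDFs that become steep near $\partial\mathcal{U}$ can still be matched under the boundedness constraints on $b_j$ and $\tau_j$. A secondary point needing care is the bookkeeping of the two nested approximations (sigmoid mixture, then neural-net conditioner) so that their errors compose additively; this is routine given the local Lipschitz bounds but should be stated cleanly.
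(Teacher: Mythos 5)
Your proposal is correct and follows essentially the same route as the paper: the iterated conditional CDF (Rosenblatt) transform is the target map, Lemmas~\ref{lemma:dsf} and~\ref{lemma:dsf_composed} supply the sigmoid-mixture and conditioner approximations, and pointwise convergence is transferred to convergence in distribution (the paper via dominated convergence and Portmanteau in Lemma~\ref{lemma:conv_dist}, you via almost-sure convergence). Your compact-exhaustion step and the worries you flag about equicontinuity and steepness near $\partial\mathcal{U}$ are a slightly more careful treatment of the open, possibly unbounded domain than the paper's appeal to Lemma~\ref{lemma:dsf_composed} on a fixed box $[r_0,r_1]^m$, but the underlying argument is the same.
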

\begin{theorem}
\label{theorem:struct2struct}
\textnormal{(DSF universally transforms any random variables into any desired random variables)} Let $X$ be a random vector in an open set $\mathcal{U}\subset\R^m$. 
Let $Y$ be a random vector in $\R^m$.
Assume both $X$ and $Y$ have a positive and continuous probability density distribution. 
Then there exists a sequence of functions $(K_n)_{n\geq1}$  parameterized by autoregressive neural networks in the following form
\begin{align}
K(\x)_t = \sigma^{-1}\left(\SSS\left(x_t;\CCC_{t}(x_{1:t-1})\right)\right)
\end{align}
where $\CCC_{t}=(a_{tj},b_{tj},\tau_{tj})_{j=1}^n$ are functions of $x_{1:t-1}$, such that $Y_n \doteq K_n(X)$ converges in distribution to $Y$.
\end{theorem}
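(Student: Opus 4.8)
The two preceding propositions are set up to be chained, so the plan is: Proposition~\ref{theorem:struct2unstruct} gives autoregressive DSF-type maps $H_m$ with $H_m(X)\Rightarrow U$, where $U\sim\mathrm{Unif}((0,1)^m)$ and $\Rightarrow$ denotes convergence in distribution; Proposition~\ref{theorem:unstruct2struct} gives maps $G_n$ with $G_n(U)\Rightarrow Y$; and heuristically $K_n\approx G_n\circ H_n$ should send $X$ to $Y$. Two points need care: passing the weak limit through the composition, and matching the resulting object to the single-hidden-layer form $K(\x)_t=\sigma^{-1}(\SSS(x_t;\CCC_t(x_{1:t-1})))$ required by the statement.

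For the first point, fix $n$: since $G_n$ is a fixed continuous map on $\R^m$, the continuous mapping theorem gives $G_n(H_m(X))\Rightarrow G_n(U)$ as $m\to\infty$, while $G_n(U)\Rightarrow Y$ as $n\to\infty$. Because weak convergence on $\R^m$ is metrizable (e.g.\ by the L\'evy--Prokhorov or bounded-Lipschitz metric), a diagonal extraction yields an index $m(n)$ with $G_n(H_{m(n)}(X))\Rightarrow Y$.

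For the form issue I would argue coordinatewise, and I find it cleaner to bypass the literal composition and instead approximate the autoregressive conditional-quantile transport directly. With $x_{1:t-1}$ fixed, let $T_t(\cdot) = F^{-1}_{Y_t\mid \hat y_{1:t-1}}\!\big(F_{X_t\mid x_{1:t-1}}(\cdot)\big)$, where the conditioning values $\hat y_{1:t-1}$ are produced recursively by the earlier coordinates of the transport; by positivity and continuity of the densities each conditional CDF is a homeomorphism onto $(0,1)$, so $T_t$ is a strictly increasing continuous bijection onto $\R$ and the full map pushes the law of $X$ exactly onto that of $Y$. Equivalently $\sigma\circ T_t$ is a strictly increasing continuous surjection onto $(0,1)$, hence a uniform-on-compacta limit of pre-logit sums $\SSS(\cdot;\CCC_t)$ (Equation~\ref{eq:prelogit-dsf}) as the number of hidden units grows --- this is the one-dimensional approximation lemma already needed for Proposition~\ref{theorem:unstruct2struct} --- and the continuous dependence of $\CCC_t$ on $x_{1:t-1}$ is supplied by the autoregressive conditioner via its own universal approximation property. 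A diagonal argument over hidden width and over a compact exhaustion of $\R^m$ then delivers $K_n$ of the stated DSF form with $K_n(X)\Rightarrow Y$. (Alternatively one can keep $G_n\circ H_{m(n)}$ from the previous paragraph and absorb the resulting two-layer sigmoidal flow back into a single DSF in the same way.)

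The main obstacle is not the algebra but the analysis behind ``converges in distribution'': upgrading uniform-on-compacta approximation of the transport to a genuine weak limit requires ruling out escape of mass to infinity. This is exactly where the hypotheses that $X$ and $Y$ have positive continuous densities are used --- they make the relevant conditional CDFs continuous increasing bijections onto $(0,1)$ and supply the tightness that controls the tails --- and it is the content of the technical lemmas in Appendix~\ref{sec:proof_of_lemmas}, on which I would lean. The remaining nested diagonalization and bookkeeping (over hidden width, conditioner error, and the compact exhaustion) are routine once metrizability of weak convergence is in hand.
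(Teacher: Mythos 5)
Your preferred ``direct'' route is exactly the paper's proof: the transport $T_t=F^{-1}_{Y_t\mid\cdot}\circ F_{X_t\mid\cdot}$ is precisely the composition $G\circ H$ of the conditional CDFs of $X$ with the inverse conditional CDFs of $Y$, which the paper shows is autoregressive and coordinatewise strictly monotone, then approximates $\sigma\circ G\circ H$ uniformly in the DSF form via Lemma~\ref{lemma:dsf_composed} and concludes via Lemma~\ref{lemma:conv_dist}. The only real difference is that the paper dispenses with your tightness/metrizability/diagonalization concerns: Lemma~\ref{lemma:conv_dist} converts pointwise convergence of the transformations directly into convergence in distribution by dominated convergence plus Portmanteau, so neither the chained $G_n\circ H_{m(n)}$ construction nor a compact-exhaustion argument is needed.
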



\section{Related work}

Neural autoregressive flows are a generalization of the affine autoregressive flows introduced by \citet{kingma2016improved} as {\bf inverse autoregressive flows (IAF)} and further developed by \citet{chen2016variational} and \citet{papamakarios2017masked} as {\bf autoregressive flows (AF)} and {\bf masked autoregressive flows (MAF)}, respectively; for details on their relationship to our work see Sections 2 and 3.
While \citet{Dinh2014} draw a particular connection between their {\bf NICE } model and the {\bf Neural Autoregressive Density Estimator (NADE)} \citep{larochelle2011}, \citep{kingma2016improved} were the first to highlight the general approach of using autoregressive models to construct normalizing flows.
\citet{chen2016variational} and then \citet{papamakarios2017masked} subsequently noticed that this same approach could be used efficiently in reverse when the key operation is evaluating, as opposed to sampling from, the flow's learned output density.
Our method increases the expressivity of these previous approaches by using a neural net to output pseudo-parameters of another network, thus falling into the hypernetwork framework \citep{ha2016hypernetworks,Bertinetto2016,Brabandere2016}.

There has been a growing interest in normalizing flows (NFs) in the deep learning community, driven by successful applications and structural advantages they have over alternatives.
\citet{rippel2013high}, \citet{rezende2015variational} and \citet{Dinh2014} first introduced normalizing flows to the deep learning community as density models, variational posteriors and generative models, respectively.
In contrast to traditional variational posteriors, NFs can represent a richer family of distributions without requiring approximations (beyond Monte Carlo estimation of the KL-divergence). 
The NF-based {\bf RealNVP}-style generative models \citep{dinh2016density,Dinh2014} also have qualitative advantages over alternative approaches.
Unlike {\bf generative adversarial networks (GANs)} \citep{goodfellow2014generative} and {\bf varational autoencoders (VAEs)} \citep{kingma2013auto,rezende2014stochastic}, computing likelihood is cheap.
Unlike autoregressive generative models, such as {\bf pixelCNNs} \citep{oord2016pixel}, sampling is also cheap.
Unfortunately, in practice RealNVP-style models are not currently competitive with autoregressive models in terms of likelihood, perhaps due to the more restricted nature of the transformations they employ.

Several promising recent works expand the capabilities of NFs for generative modeling and density estimation, however.
Perhaps the most exciting example is \citet{oord2017parallel}, who propose the {\bf probability density distillation} technique to train an IAF \citep{kingma2016improved} based on the autoregressive {\bf WaveNet} \citep{WaveNet} as a generative model using another pretrained WaveNet model to express the target density, thus overcoming the slow sequential sampling procedure required by the original WaveNet (and characteristic of autoregressive models in general), and reaching super-real-time speeds suitable for production.
The previously mentioned MAF technique \citep{papamakarios2017masked} further demonstrates the potential of NFs to improve on state-of-the-art autoregressive density estimation models; such highly performant MAF models could also be ``distilled'' for rapid sampling using the same procedure as in \citet{oord2017parallel}.

Other recent works also find novel applications of NFs, demonstrating their broad utility.
\citet{loaiza2017maximum} use NFs to solve maximum entropy problems, rather than match a target distribution.
\citet{louizos2017multiplicative} and \citet{krueger2017bayesian} apply NFs to express approximate posteriors over parameters of neural networks.
\citet{song2017nice} use NFs as a proposal distribution in a novel Metropolis-Hastings MCMC algorithm.

Finally, there are also several works which develop new techniques for constructing NFs that are orthogonal to ours \citep{tomczak2017improving,tomczak2016improving,gemici2016normalizing,duvenaud2016early,berg2018sylvester}.

\begin{table}
\caption{Using DSF to improve variational inference. We report the number of affine IAF with our implementation. We note that the log likelihood reported by \citet{kingma2016improved} is $78.88$. The average and standard deviation are carried out with 5 trials of experiments with different random seeds.}
\label{tb:vae}
\centering
\begin{tabular}{lcc}
\toprule
Model& ELBO & $\log p(x)$\\
\midrule
VAE & $85.00\pm0.03$ & $81.66\pm0.05$\\
IAF-affine & $82.25\pm0.05$ & $80.05\pm0.04$ \\
IAF-DSF  & $81.92\pm0.04$ & $79.86\pm0.01$\\
\bottomrule
\end{tabular}
\end{table}

\begin{table*}[tbh]
\caption{
Test log-likelihood and error bars of 2 standard deviations on the 5 datasets (5 trials of experiments).
Neural autoregressive flows (NAFs) produce state-of-the-art density estimation results on all 5 datasets.
The numbers (5 or 10) in parantheses indicate the number of transformations which were stacked; for TAN \citep{TAN}, we include their best results, achieved using different architectures on different datasets.
We also include validation results to give future researchers a fair way of comparing their methods with ours during development.
}
\vspace{1mm}
\label{tb:maf}
\centering
\begin{tabular}{lccccc}
\toprule
Model& POWER & GAS & HEPMASS & MINIBOONE & BSDS300\\
\midrule
MADE MoG & $0.40\pm0.01$ & $8.47\pm0.02$ & $-15.15\pm0.02$ & $-12.27\pm0.47$ & $153.71\pm0.28$ \\
MAF-affine (5) & $0.14\pm0.01$ & $9.07\pm0.02$ & $-17.70\pm0.02$ & $-11.75\pm0.44$ & $155.69\pm0.28$ \\
MAF-affine (10)  & $0.24\pm0.01$ & $10.08\pm0.02$ & $-17.73\pm0.02$ & $-12.24\pm0.45$ & $154.93\pm0.28$\\
MAF-affine MoG (5) & $0.30\pm0.01$ & $9.59\pm0.02$ & $-17.39\pm0.02$ & $-11.68\pm0.44$ & $156.36\pm0.28$ \\
\midrule
TAN (various architectures) & $0.48\pm0.01$ & $11.19\pm0.02$ & $-15.12\pm0.02$ & $-11.01\pm0.48$ & $157.03\pm0.07$ \\
\midrule
MAF-DDSF (5)  & $\mathbf{0.62 \pm 0.01} $ & $11.91 \pm 0.13$  & $\mathbf{-15.09 \pm 0.40}$ & $\mathbf{-8.86 \pm 0.15}$ & $\mathbf{157.73 \pm 0.04}$ \\
MAF-DDSF (10) & $0.60 \pm 0.02$           & $\mathbf{11.96 \pm 0.33}$           & $        -15.32 \pm 0.23 $ & $        -9.01 \pm 0.01 $ & $        157.43 \pm 0.30$ \\
\midrule
MAF-DDSF (5)  valid & $0.63 \pm 0.01$ & $11.91 \pm 0.13$ & $ 15.10 \pm 0.42$ & $-8.38 \pm 0.13$ & $172.89 \pm 0.04$ \\
MAF-DDSF (10) valid & $0.60 \pm 0.02$ & $11.95 \pm 0.33$ & $ 15.34 \pm 0.24$ & $-8.50 \pm 0.03$ & $172.58 \pm 0.32$ \\
\bottomrule
\end{tabular}
\end{table*}

\begin{figure}
\label{fig:grid}
\centering
\includegraphics[width=0.46\textwidth, height=0.15\textwidth]{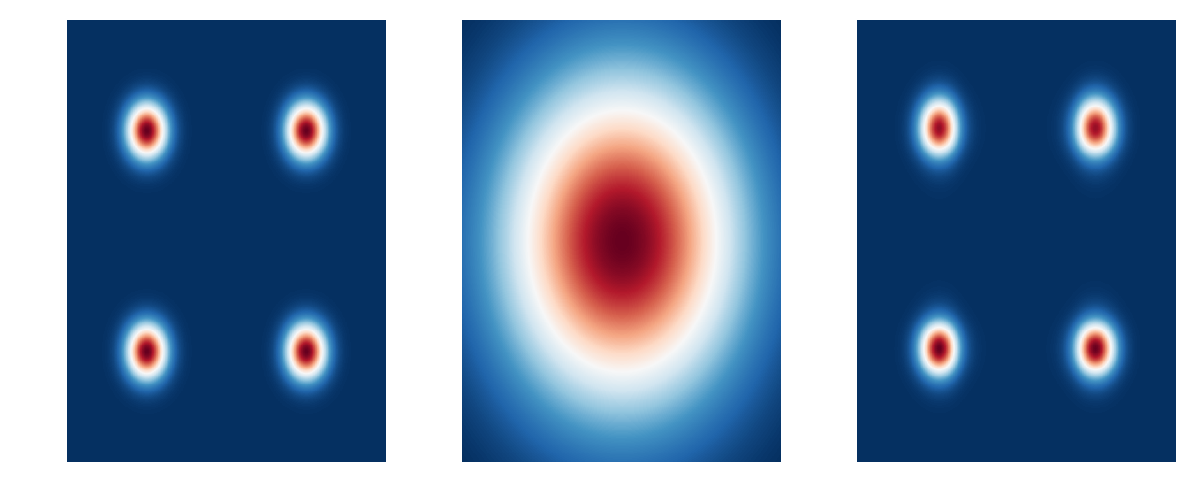}
\includegraphics[width=0.46\textwidth, height=0.15\textwidth]{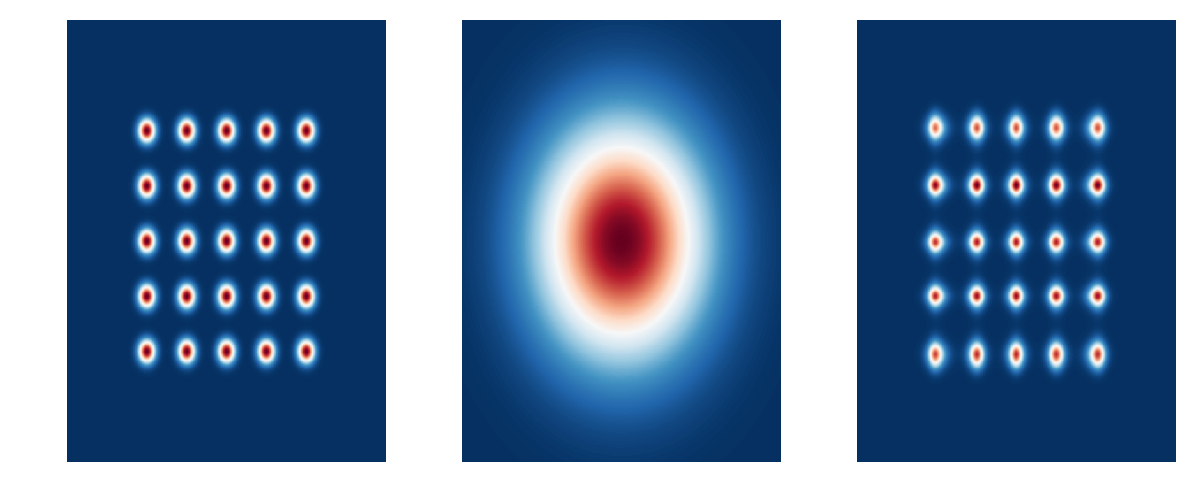}
\includegraphics[width=0.46\textwidth, height=0.15\textwidth]{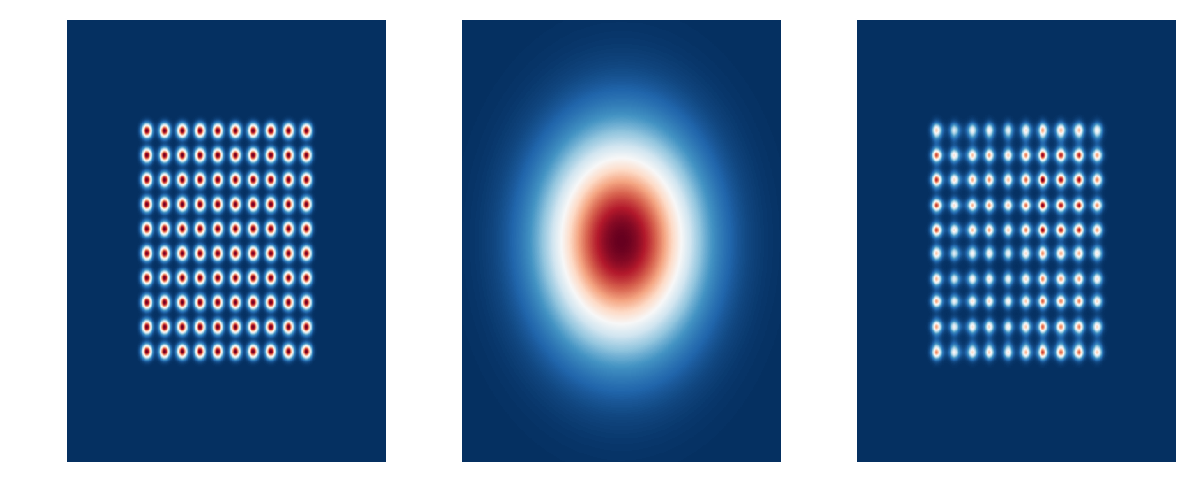}
\caption{Fitting grid of Gaussian distributions using maximum likelihood. Left: true distribution. Center: affine autoregressive flow (AAF). Right: neural autoregressive flow (NAF)}
\end{figure}

\begin{figure}
\centering
\includegraphics[width=0.23\textwidth]{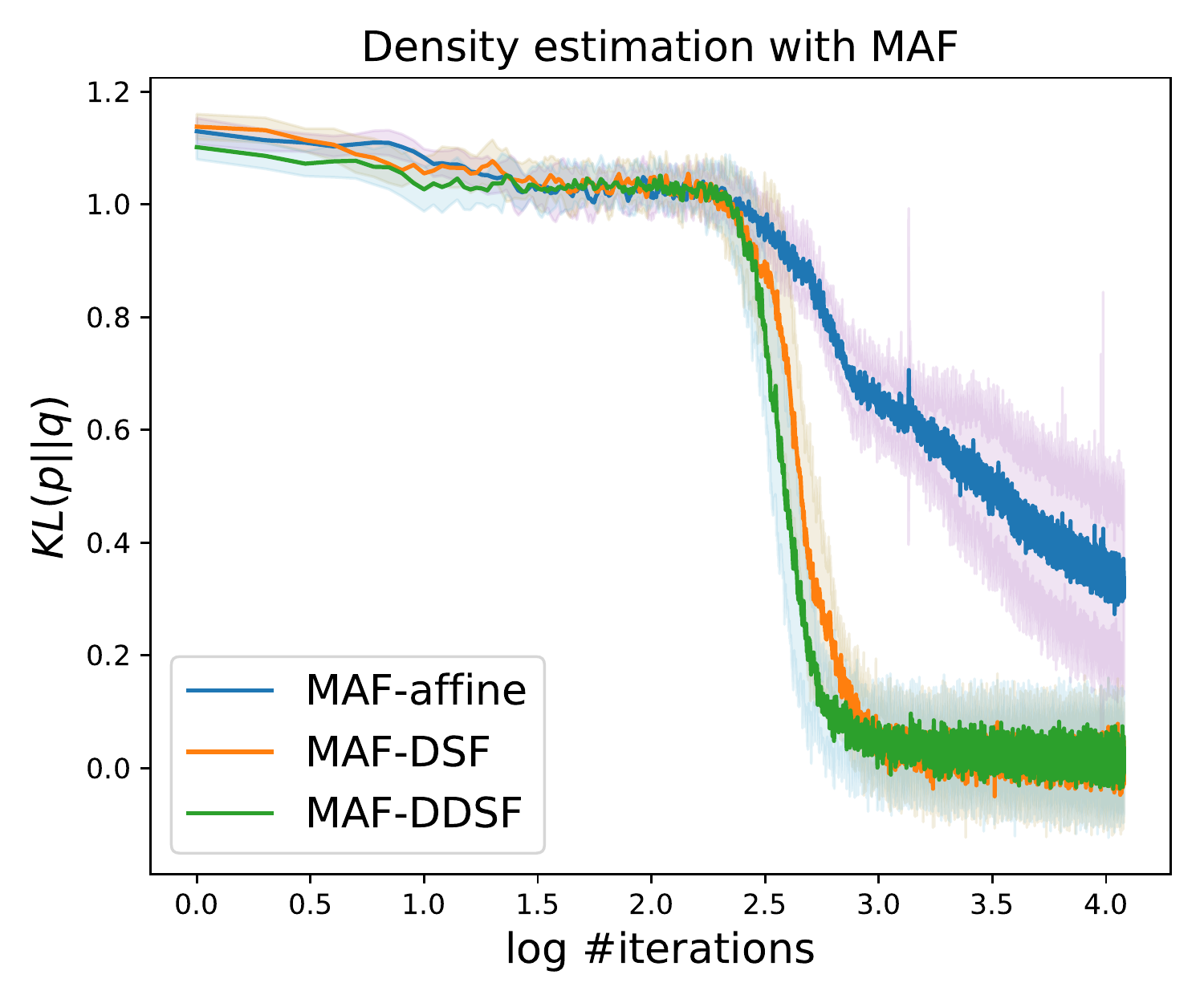}
\includegraphics[width=0.23\textwidth]{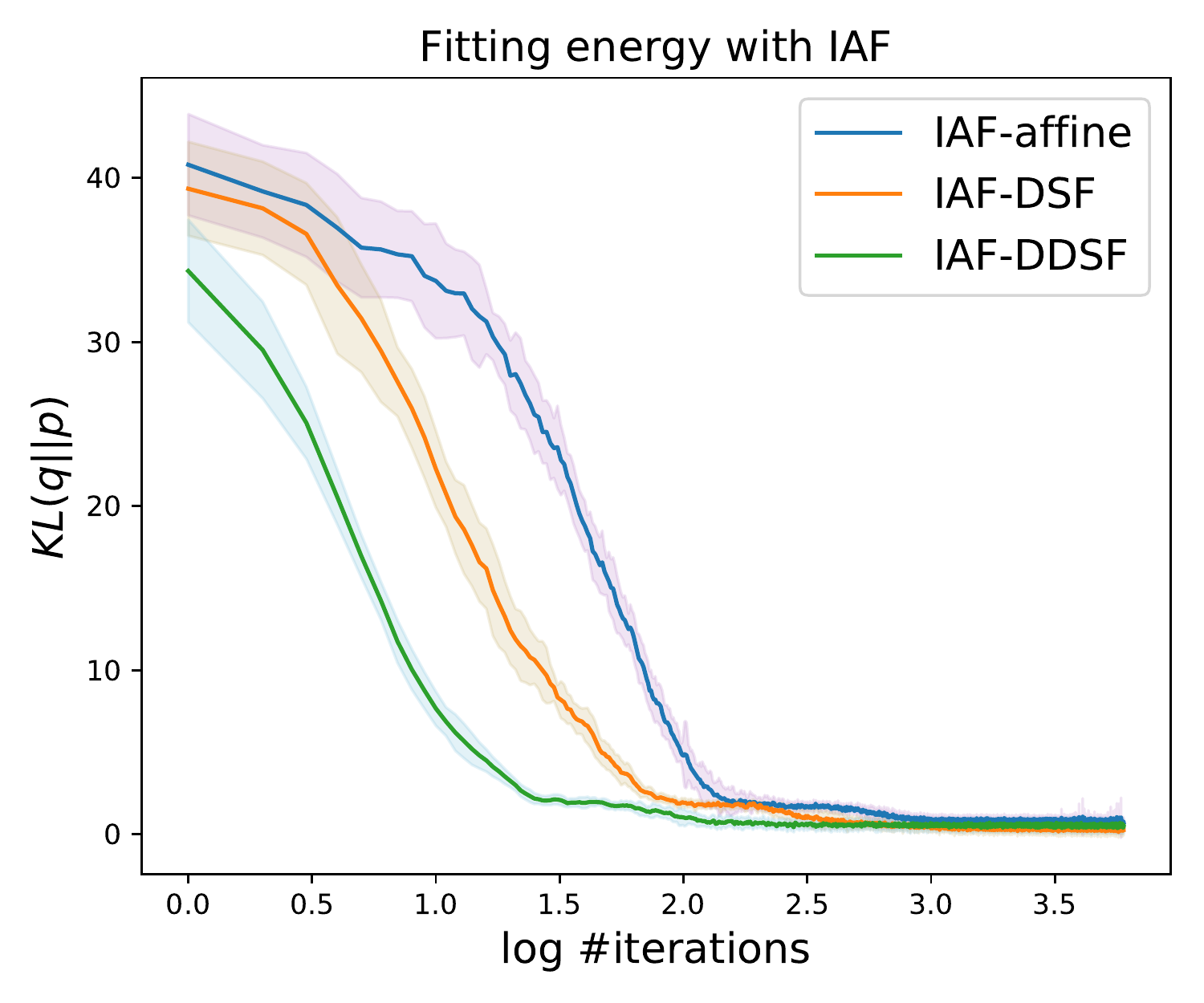}
\caption{Learning curve of MAF-style and IAF-style training. $q$ denotes our trained model, and $p$ denotes the target.}
\end{figure}

\begin{figure}
\centering
 \includegraphics[width=0.23\textwidth]{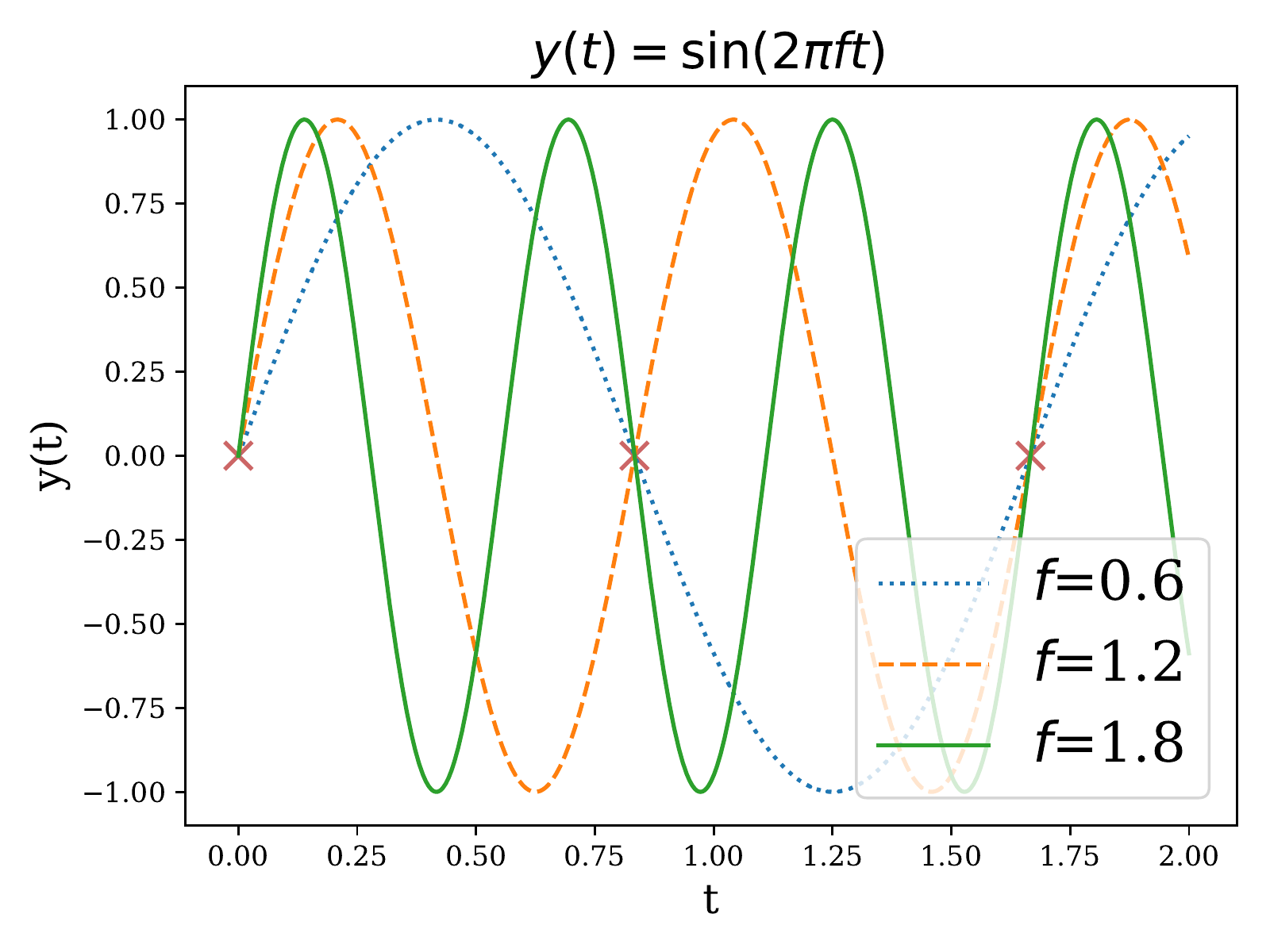}
\hfill
\includegraphics[width=0.23\textwidth]{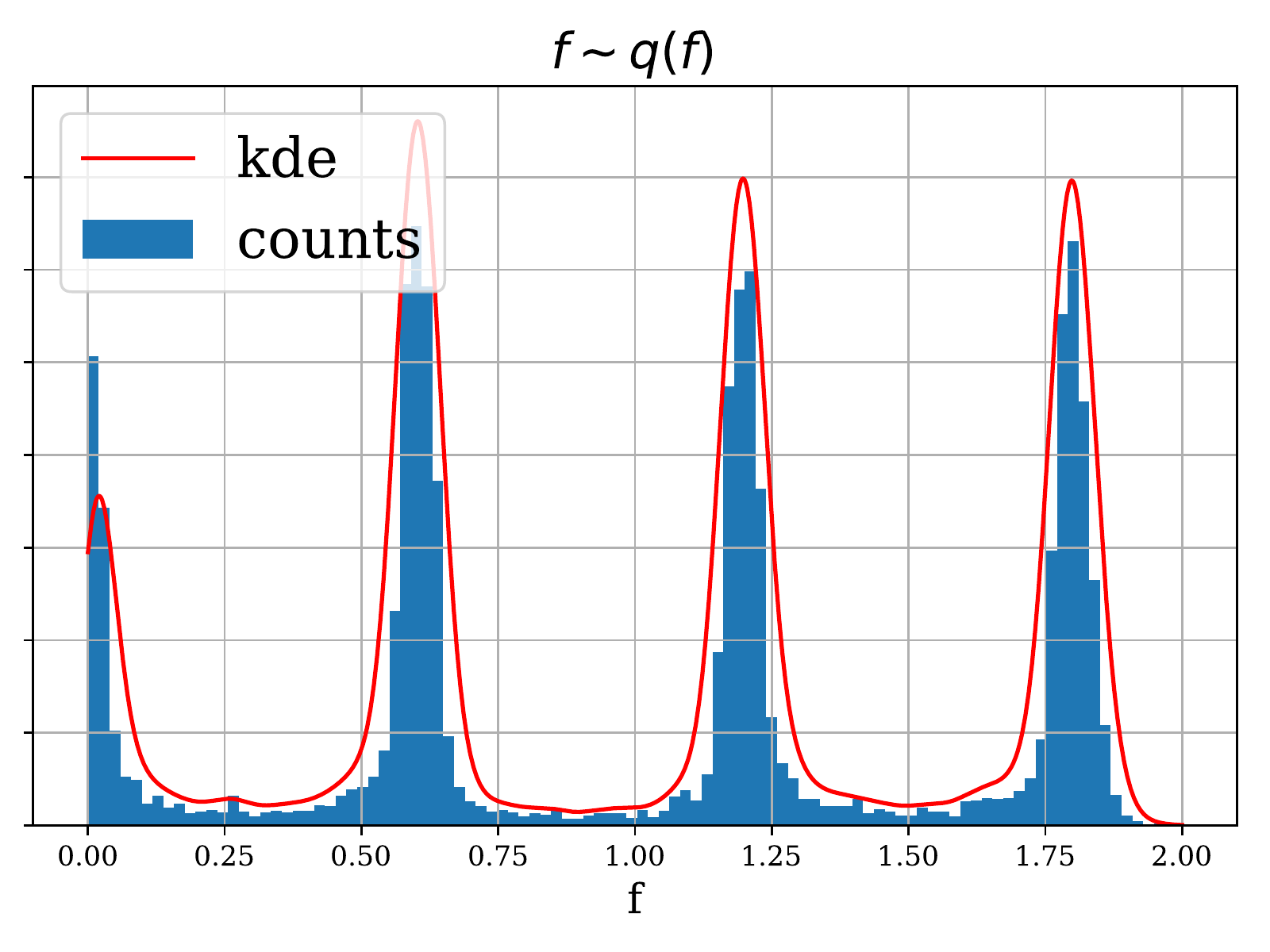}
\caption{
The DSF model effectively captures the true posterior distribution over the frequency of a sine wave.
Left: The three observations (marked with red x's) are compatible with sine waves of frequency $f \in {0.0, 0.6, 1.2, 1.8}$.  Right:  a histogram of samples from the DSF approximate posterior (``counts'') and a Kernel Density Estimate of the distribution it represents (KDE).  
}
\label{fig:sw}
\end{figure}


\section{Experiments}
Our experiments evaluate NAFs on the classic applications of variational inference and density estimation, where we outperform IAF and MAF baselines.
We first demonstrate the qualitative advantage NAFs have over AAFs in energy function fitting and density estimation (Section~\ref{subsec:toy}). 
We then demonstrate the capability of NAFs to capture a multimodal Bayesian posterior in a limited data setting (Section~\ref{subsec:sine}). 
For larger-scale experiments, we show that using NAF instead of IAF to approximate the posterior distribution of latent variables in a variational autoencoder \citep{kingma2013auto,rezende2014stochastic} yields better likelihood results on binarized MNIST \cite{larochelle2011} (Section~\ref{subsec:VAE}). 
Finally, we report our experimental results on density estimation of a suite of UCI datasets (Section~\ref{subsec:MAF}).

\subsection{Toy energy fitting and density estimation}
\label{subsec:toy}
\subsubsection{Expressiveness}
First, we demonstrate that, in the case of marginally independent distributions, affine transformation can fail to fit the true distribution. 
We consider a mixture of Gaussian density estimation task. 
We define the modes of the Gaussians to be laid out on a 2D meshgrid within the range [-5,5], and consider 2, 5 and 10 modes on each dimension.
While the affine flow only produces a single mode, the neural flow matches the target distribution quite well even up to a 10x10 grid with 100 modes (see Figure~\ref{fig:grid}).

\subsubsection{Convergence} 
We then repeat the experiment that produces Figure \ref{fig:4mode_iaf} and \ref{fig:4mode_maf} 16 times, smooth out the learning curve and present average convergence result of each model with its corresponding standard deviation. 
For affine flow, we stack 6 layers of transformation with reversed ordering.
For DSF and DDSF we used one transformation.
We set $d=16$ for both, $L=2$ for DDSF.

\subsection{Sine Wave experiment}
\label{subsec:sine}
Here we demonstrate the ability of DSF to capture multimodal posterior distributions.
To do so, we create a toy experiment where the goal is to infer the posterior over the frequency of a sine wave, given only 3 datapoints.
We fix the form of the function as $y(t)=\sin(2\pi f\cdot t)$
and specify a Uniform prior over the frequency: $p(f) = U([0,2])$.
The task is to infer the posterior distribution $p(f|T,Y)$ given the dataset $(T,Y) = ((0,\vfrac{5}{6},\vfrac{10}{6}),(0,0,0))$, as represented by the red crosses of Figure \ref{fig:sw} (left).
We assume the data likelihood given the frequency parameter to be $p(y_i|t_i,f)=\mathcal{N}(y_i;y_f(t_i), 0.125)$, where the variance $\sigma^2=0.125$ represents the inherent uncertainty of the data. 
Figure \ref{fig:sw} (right) shows that DSF learns a good posterior in this task.

\subsection{Amortized Approximate Posterior}
\label{subsec:VAE}
We evaluate NAF's ability to improve variational inference, in the context of the binarized MNIST \citep{larochelle2011} benchmark using the well-known variational autoencoder \citep{kingma2013auto,rezende2014stochastic} (Table \ref{tb:vae}).
Here again the DSF architecture outperforms both standard IAF and the traditional independent Gaussian posterior by a statistically significant margin.

\subsection{Density Estimation with Masked Autoregressive Flows}
\label{subsec:MAF}

We replicate the density estimation experiments of \citet{papamakarios2017masked}, which compare MADE~\citep{MADE} and RealNVP~\citep{dinh2016density} to their proposed MAF model (using either 5 or 10 layers of MAF) on BSDS300 \citep{BSDS} as well as 4 UCI datasets \citep{UCI} processed as in \citet{uria2013rnade}.
Simply replacing the affine transformer with our DDSF architecture in their best performing architecture for each task (keeping all other settings fixed) results in substantial performance gains, and also outperforms the more recent Transformation Autoregressive Networks (TAN) \citet{TAN}, setting a new state-of-the-art for these tasks. 
Results are presented in Table~\ref{tb:maf}.

\section{Conclusion} 
In this work we introduce the neural autoregressive flow (NAF), a flexible method of tractably approximating rich families of distributions. 
In particular, our experiments show that NAF is able to model multimodal distributions and outperform related methods such as inverse autoregressive flow in density estimation and variational inference.
Our work emphasizes the difficulty and importance of capturing multimodality, as previous methods fail even on simple toy tasks, whereas our method yields significant improvements in performance.

\section*{Acknowledgements}
We would like to thank Tegan Maharaj, Ahmed Touati, Shawn Tan and Giancarlo Kerg for helpful comments and advice. 
We also thank George Papamakarios for providing details on density estimation task's setup. 

\FloatBarrier

\bibliography{example_paper}
\bibliographystyle{icml2018}
\clearpage
\appendix


\section{Exclusive KL View of the MLE}
\label{sec:KLnMLE}
Lets assume a change-of-variable model $p_Z(\z)$ on the random variable $Z\in\R^m$, such as the one used in \citet{dinh2016density}: $\mathbf{z_0}\sim p_0(\mathbf{z_0})$ and $\z=\psi(\mathbf{z_0})$, where $\psi$ is an invertible function and density evaluation of $\mathbf{z_0}\in\R^m$ is tractable under $p_0$. 
The resulting density function can be written
$$p_Z(\z)=p_0(\mathbf{z_0})\left| \frac{\partial \psi(\mathbf{z_0})}{\partial \mathbf{z_0}} \right|^{-1}$$
The maximum likelihood principle requires us to minimize the following metric:
\begin{align*}
\KL&\big( p_{data}(\z) || p_Z(\z) \big) \\
&= \E_{p_{data}}\left[\log p_{data}(\z) - \log p_Z(\z) \right] \\
&= \E_{p_{data}}\left[\log p_{data}(\z) - \log p_0(\mathbf{z_0})\left| \frac{\partial \psi^{-1}(\z)}{\partial \z} \right| \right] \\
&= \E_{p_{data}}\left[\log p_{data}(\z)\left| \frac{\partial \psi^{-1}(\z)}{\partial \z} \right|^{-1} - \log p_0(\mathbf{z_0}) \right] 
\intertext{which coincides with exclusive KL divergence; 
to see this, we take $X=Z$, $Y=Z_0$, $f=\psi^{-1}$, $p_X=p_{data}$, and $p_{target}=p_0$}
&= \E_{p_{X}}\left[\log p_{X}(\x)\left| \frac{\partial f(\x)}{\partial \x} \right|^{-1} - \log p_{target}(\y) \right] \\
&= \KL\big( p_Y(\y) || p_{target}(\y) \big)
\end{align*}
This means we want to transform the empirical distribution $p_{data}$, or $p_X$, to fit the target density (the usually unstructured, base distribution $p_0$), as explained in Section~\ref{sec:background}.

\section{Monotonicity of NAF}
Here we show that using \textit{strictly positive weights} and \textit{strictly monotonically increasing activation functions} is sufficient to ensure strict monotonicity of NAF.
For brevity, we write \textit{monotonic}, or \textit{monotonicity} to represent the strictly monotonically increasing behavior of a function.
Also, note that a continuous function is strictly monotonically increasing exactly when its derivative is  greater than 0.

\begin{proof} \textnormal{(Proposition~\ref{proposition:mono})}

Suppose we have an MLP with $L+1$ layers: $h_0, h_1, h_2, ... , h_L$, $x=h_0$ and $y=h_L$, where $x$ and $y$ are scalar, and $h_{l,j}$ denotes the $j$-th node of the $l$-th layer.
For $1\leq l\leq L$, we have
\begin{align}
p_{l,j} &= w_{l,j}^Th_{l-1}+b_{l,j} \\
h_{l,j} &= A_{l}(p_{l,j})
\end{align}
for some monotonic activation function $A_l$, positive weight vector $w_{l,j}$, and bias $b_{l,j}$. 
Differentiating this yields
\begin{align}
\frac{\mathrm{d}h_{l,j}}{\mathrm{d}h_{l-1,k}} = \frac{\mathrm{d}A_{l}(p_{l,j})}{\mathrm{d}p_{l,j}}\cdot w_{l,j,k},
\label{eq:unit2unit}
\end{align}
which is greater than 0 since $A$ is monotonic (and hence has positive derivative), and $w_{l,j,k}$ is positive by construction.
Thus any unit of layer $l$ is monotonic with respect to any unit of layer $l-1$, for $1\leq l\leq L$. 

Now, suppose we have $J$ monotonic functions $f_j$ of the input $x$.
Then the weighted sum of these functions $\sum_{j=1}^J u_jf_j $ with $u_j>0$ is also monotonic with respect to $x$, since 
\begin{align}
\frac{\mathrm{d}}{\mathrm{d}x}\sum_{j=1}^J u_jf_j = \sum_{j=1}^J u_j \frac{\mathrm{d}f_j }{\mathrm{d}x} > 0
\label{eq:possum}
\end{align}

Finally, we use induction to show that all $h_{l,j}$ (including $y$) are monotonic with respect to $x$.
\begin{enumerate}
\item The base case ($l=1$) is given by Equation \ref{eq:unit2unit}. 
\item Suppose the inductive hypothesis holds, which means $h_{l,j}$ is monotonic with respect to $x$ for all $j$ of layer $l$. 
Then by Equation \ref{eq:possum}, 
$h_{l+1,k}$ is also monotonic with respect to $x$ for all k.
\end{enumerate}

Thus by mathematical induction, monotonicity of $h_{l,j}$ holds for all $l$ and $j$. 
\end{proof}

\section{Log Determinant of Jacobian}
As we mention at the end of Section~\ref{subsec:arch}, to compute the log-determinant of the Jacobian as part of the objective function, we need to handle the numerical stability.
We first derive the Jacobian of the DDSF (note that DSF is a special case of DDSF), and then summarize the numerically stable operations that were utilized in this work.

\subsection{Jacobian of DDSF}
Again defining $x=h_0$ and $y=h_L$, the Jacobian of each DDSF transformation can be written as a sequence of dot products due to the chain rule:
\begin{align}
\nabla_xy=
\left[\nabla_{h^{(L-1)}}h^{(L)}\right]
\left[\nabla_{h^{(L-2)}}h^{(L-1)}\right]
, \cdots, 
\left[\nabla_{h^{(0)}}h^{(1)}\right]
\label{eq:app-chain}
\end{align} 

For notational convenience, we define a few more intermediate variables. For each layer of DDSF, we have
\begin{align}
\smallunderbrace{C^{(l+1)}}_{d_{l+1}} &=  \smallunderbrace{a^{(l+1)}}_{d_{l+1}} \odot (\smallunderbrace{u^{(l+1)}}_{d_{l+1}\times d_{l}} \cdot \smallunderbrace{h^{(l)}}_{d_{l}}) + {\smallunderbrace{b^{(1+1)}}_{d_{l+1}}} \nonumber\\
\smallunderbrace{D^{(l+1)}}_{d_{l+1}} &= \smallunderbrace{w^{(l+1)}}_{d_{l+1}\times d_{l+1}} \cdot\, \sigma(\smallunderbrace{C^{(l+1)}}_{d_{l+1}}) \nonumber\\
\smallunderbrace{h^{(l+1)}}_{d_{l+1}} ) \nonumber &= \sigma^{-1}(\smallunderbrace{D^{(l+1)}}_{d_{l+1}} ) \nonumber
\end{align}
The gradient can be expanded as

\begin{align}
\nabla_{h^{(l)}}\left(h^{(l+1)}\right) &=
\,\bigg(
\;\nabla_D\left(\sigma^{-1}(D^{(l+1)})\right)_{[:,\bullet]} \odot \nonumber \\ 
&\quad\;\;\quad\nabla_{\sigma(C)}\left(D^{(l+1)}\right) \odot \nonumber \\
&\quad\;\;\quad\nabla_C\left(\sigma(C^{(l+1)})\right)_{[\bullet,:]}  \odot \nonumber \\
&\quad\;\;\quad\nabla_{(u^{(l+1)}h^{(l)})}\left(C^{(l+1)}\right)_{[\bullet,:]} 
\bigg)_{[:,:,\bullet]}  \times_{-1} \nonumber\\
&\quad\;\;\;\nabla_{h^{(l)}}\left(u^{(l+1)}h^{(l)}\right)_{[\bullet,:,:]} \nonumber 
\intertext{where the bullet $\bullet$ in the subscript indicates the dimension is broadcasted, $\odot$ denotes element-wise multiplication, and $\times_{-1}$ denotes summation over the last dimension after element-wise product,}
&=
\,\bigg(
\;\left(\frac{1}{D^{(l+1)}(1-D^{(l+1)})}\right)_{[:,\bullet]} \odot \nonumber \\ 
&\quad\;\;\quad\left(w^{(l+1)}\right) \odot \nonumber \\
&\quad\;\;\quad\left(\sigma(C^{(l+1)})\odot(1-\sigma(C^{(l+1)}))\right)_{[\bullet,:]} \odot \nonumber \\
&\quad\;\;\quad\left(a^{(l+1)}\right)_{[\bullet,:]}
\bigg)_{[:,:,\bullet]}  \times_{-1} \nonumber\\
&\quad\;\;\;\left(u^{(l+1)}\right)_{[\bullet,:,:]} 
\label{eq:app-ddsf-jacob}
\end{align}

\subsection{Numerically Stable Operations}
Since the Jacobian of each DDSF transformation is chain of dot products (Equation \ref{eq:app-chain}), with some nested multiplicative operations (Equation \ref{eq:app-ddsf-jacob}), we calculate everything in the log-scale (where multiplication is replaced with addition) to avoid unstable gradient signal.

\subsubsection{Log Activation}
\label{sec:log_activation}
To ensure the summing-to-one and positivity constraints of $u$ and $w$, we let the autoregressive conditioner output pre-activation $u\_$ and $w\_$, and apply softmax to them. 
We do the same for $a$ by having the conditioner output $a\_$ and apply softplus to ensure positivity. 
In Equation \ref{eq:app-ddsf-jacob}, we have 
\begin{align}
\log w &= \operatorname{logsoftmax}(w\_) \nonumber \\
\log u &= \operatorname{logsoftmax}(u\_) \nonumber \\
\log \sigma(C) &= \operatorname{logsigmoid}(C) \nonumber \\
\log 1-\sigma(C) &= \operatorname{logsigmoid}(-C)    \nonumber
\end{align}
where
\begin{align}
\operatorname{logsoftmax}(x) &= x - \operatorname{logsumexp}(x) \nonumber \\
\operatorname{logsigmoid}(x) &= -\operatorname{softplus}(-x) \nonumber \\
\operatorname{logsumexp}(x) &= \log(\sum_i \exp(x_i-x^*)) + x^* \nonumber \\
\operatorname{softplus}(x) &= \log(1+\exp(x)) + \delta   \nonumber
\end{align}
where $x^*=\max_i\{x_i\}$ and $\delta$ is a small value such as $10^{-6}$.

\subsubsection{Logarithmic dot product}
In both Equation \ref{eq:app-chain} and \ref{eq:app-ddsf-jacob}, we encounter matrix/tensor product, which is achived by summing over one dimension after doing element-wise multiplication. 
Let $\tilde{M}_1=\log M_1$ and $\tilde{M}_2=\log M_2$ be $d_0\times d_1$ and $d_1\times d_2$, respectively.
The logarithmic matrix dot product $\star$ can be written as:
\begin{align}
\tilde{M}_1\star&\tilde{M}_2 = \nonumber\\
&\operatorname{logsumexp}_{\textnormal{dim}=1}\left(\left(\tilde{M}_1\right)_{[:,:,\bullet]} +
\left(\tilde{M}_2\right)_{[\bullet,:,:]}\right) \nonumber
\end{align}
where the subscript of $\operatorname{logsumexp}$ indicates the dimension (index starting from $0$) over which the elements are to be summed up. 
Note that $\tilde{M}_1\star\tilde{M}_2=\log \left(M_1\cdot M_2\right)$.

\section{Scalability and Parameter Sharing}
As discussed in Section \ref{subsec:alternatives}, 
a multi-layer NAF such as DDSF requires the autoregressive conditioner $c$ to output many pseudo-parameters, on the order of 
$\mathcal{O}(Ld^2)$,
where $L$ is the number of layers of the transformer network ($\tau$), and $d$ is the average number of hidden units per layer. 
In practice, we reduce the number of outputs (and thus the computation and memory requirements) of DDSF by instead endowing $\tau$ with some learned ({\it non-conditional}) statistical parameters.
Specifically, we decompose $w\_$ and $u\_$ (the preactivations of $\tau$'s weights, see section \ref{sec:log_activation}) into pseudo-parameters and statistical parameters.
Take $u\_$ for example:
$$
u^{(l+1)} = \operatorname{softmax}_{\textnormal{dim}=1}(v^{(l+1)}+\eta_{[\bullet,:]}^{(l+1)})
$$
where $v^{(l+1)}$ is a $d_{l+1}\times d_{l}$ matrix of statistical parameters, and $\eta$ is output by $c$. 
See figure \ref{fig:factorized_weight} for a depiction.

The linear transformation before applying sigmoid resembles conditional weight normalization (CWN) \cite{krueger2017bayesian}. 
While CWN rescales the weight vectors normalized to have unit L2 norm, here we rescale the weight vector normalized by softmax such that it sums to one and is positive.
We call this \textit{conditional normalized weight exponentiation}. 
This reduces the number of pseudo-parameters to $\mathcal{O}(Ld)$.

\begin{figure}
\label{fig:factorized_weight}
\centering
\includegraphics[width=0.45\textwidth]{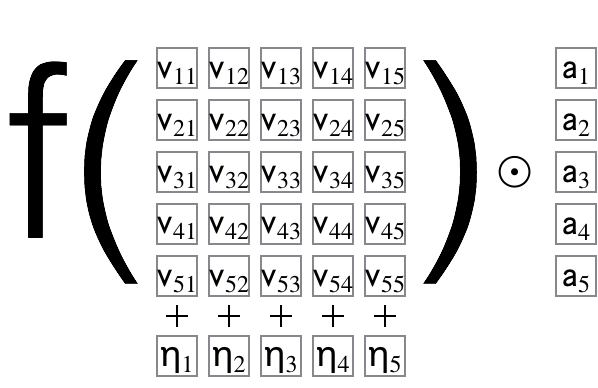}
\caption{Factorized weight of DDSF.
The $v_i,j$ are learned parameters of the model; only the pseudo-parameters $\eta$ and $a$ are output by the conditioner.
The activation function $f$ is softmax, so adding $\eta$ yields an element-wise rescaling of the inputs to this layer of the transformer by $\exp(\eta)$.
}
\end{figure}

\section{Identity Flow Initialization}
In many cases, initializing the transformation to have a minimal effect is believed to help with training, as it can be thought of as a warm start with a simpler distribution. 
For instance, for variational inference, when we initialize the normalizing flow to be an identity flow, the approximate posterior is at least as good as the input distribution (usually a fully factorized Gaussian distribution) before the transformation. 
To this end, for DSF and DDSF, we initialize the pseudo-weights $a$ to be close to $1$, the pseudo-biases $b$ to be close to $0$. 

This is achieved by initializing the conditioner (whose outputs are the pseudo-parameters) to have small weights and the appropriate output biases.
Specifically, we initialize the output biases of the last layer of our MADE \citep{MADE} conditioner to be zero, and add $\operatorname{softplus}^{-1}(1)\approx0.5413$ to the outputs of which correspond to $a$ before applying the softplus activation function. 
We initialize all conditioner's weights by sampling from from $\textnormal{Unif}(-0.001,0.001)$.
We note that there might be better ways to initialize the weights to account for the different numbers of active incoming units.

\section{Lemmas: Uniform Convergence of DSF}
\label{sec:proof_of_lemmas}
\begin{figure}
\centering
\includegraphics[width=0.45\textwidth]{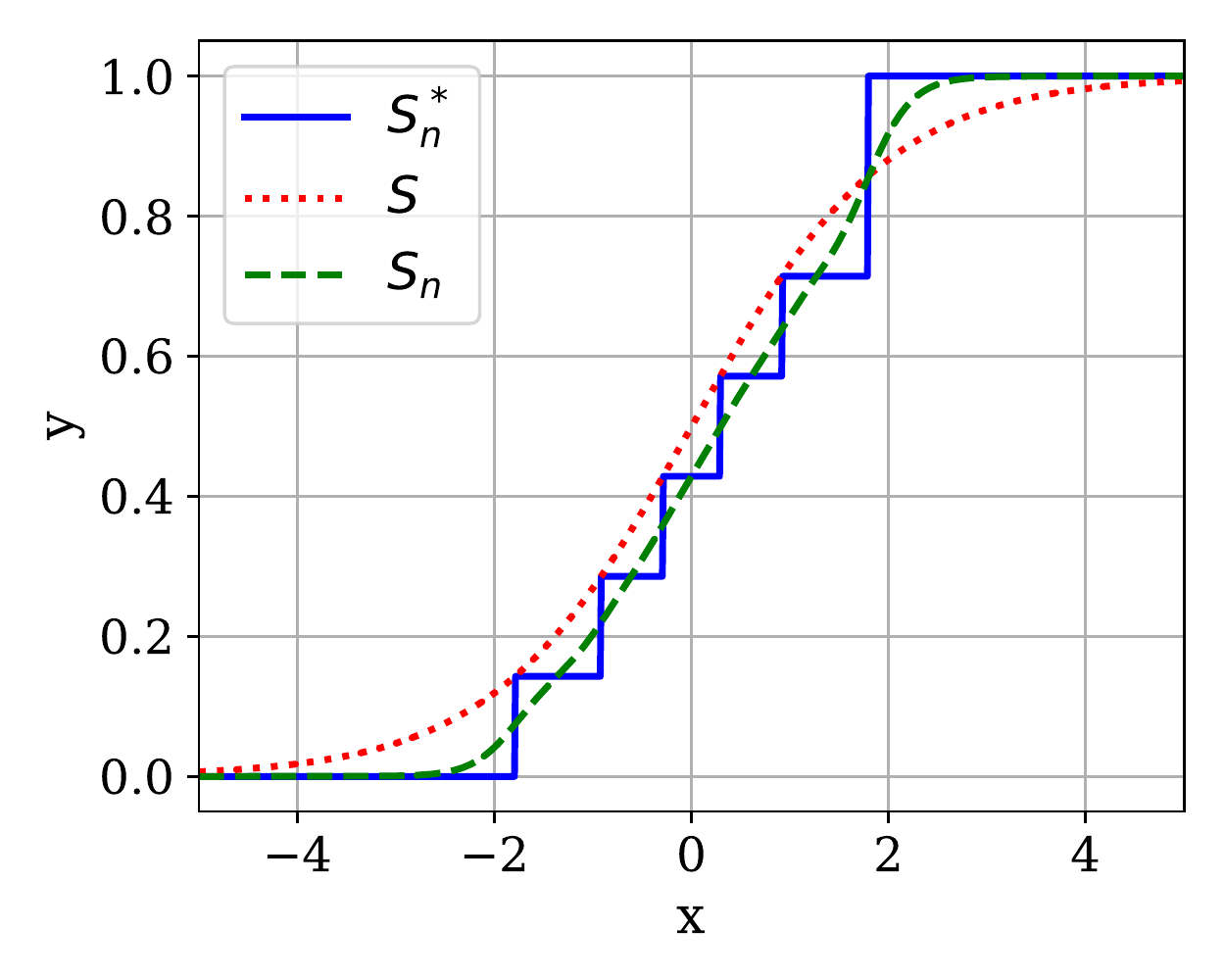}
\caption{Visualization of how sigmoidal functions can universally approximate an monotonic function in $[0,1]$. The red dotted curve is the target monotonic function ($S$), 
and blue solid curve is the intermediate superposition of step functions ($S_n^*$) with the parameters chosen in the proof. 
In this case, $n$ is 6 and $|S_n^{*}-S|\leq \frac{1}{7}$. 
The green dashed curve is the resulting superposition of sigmoids ($S_n$), that intercepts with each step of $S_n^*$ with the additionally chosen $\tau$. 
}
\label{fig:steps}
\end{figure}
We want to show the convergence result of Equation~\ref{eq:prelogit-dsf}.
To this end, we first show that DSF can be used to universally approximate any strictly monotonic function. The is the case where $x_{1:t-1}$ are fixed, which means $\CCC(x_{1:t-1})$ are simply constants.
We demonstrate it using the following two lemmas.

\begin{lemma}
\label{lemma:stepfunc}
\textnormal{(Step functions universally approximate monotonic functions)}
Define:
\begin{align}
S_n^{*}(x) = \sum_{j=1}^{n}w_j\cdot s\left({x-b_j}\right)   \nonumber
\end{align}

where $s(z)$ is defined as a step function that is $1$ when $z\geq0$ and $0$ otherwise.
For any continuous, strictly monotonically increasing $S:[r_0,r_1] \rightarrow[0,1]$ where $S(r_0)=0$, $S(r_1)=1$ and $r_0,r_1\in\R$; 
and given any $\epsilon>0$, 
there exists a positive integer $n$,
real constants $w_j$ and $b_j$ for $j=1,...,n$, where $\sum_j^nw_j=1$, $w_j>0$ and $b_j\in[r_0,r_1]$ for all $j$, such that
\linebreak $|S_n^*(x)-S(x)|<\epsilon \quad\forall x\in[r_0,r_1]$.
\label{prop:sft}
\end{lemma}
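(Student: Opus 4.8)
The plan is to build $S_n^*$ as a staircase (piecewise-constant) approximation of $S$, placing its jumps at the preimages under $S$ of an equispaced grid on the range $[0,1]$, and then to check that the approximation error is controlled by the mesh size. First I would record the only external fact needed: since $S$ is continuous and strictly increasing on $[r_0,r_1]$ with $S(r_0)=0$ and $S(r_1)=1$, the intermediate value theorem gives that $S$ is a bijection from $[r_0,r_1]$ onto $[0,1]$, so an inverse $S^{-1}\colon[0,1]\to[r_0,r_1]$ exists and is itself strictly increasing (its continuity is not actually needed in what follows).

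Next, given $\epsilon>0$, I would fix an integer $n$ with $\tfrac{1}{2n}<\epsilon$ and set $w_j=\tfrac1n$ and $b_j=S^{-1}\!\big(\tfrac{2j-1}{2n}\big)$ for $j=1,\dots,n$. By construction $w_j>0$, $\sum_{j=1}^n w_j=1$, each $\tfrac{2j-1}{2n}\in(0,1)$ so $b_j\in(r_0,r_1)\subset[r_0,r_1]$, and $b_1<b_2<\cdots<b_n$ since $S^{-1}$ is increasing. Because $s$ is the Heaviside step function, $S_n^*$ is then the right-continuous staircase equal to $0$ on $[r_0,b_1)$, equal to $\tfrac kn$ on $[b_k,b_{k+1})$ for $1\le k\le n-1$, and equal to $1$ on $[b_n,r_1]$.

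The estimate is then a routine case check over these pieces. On $[r_0,b_1)$ we have $S_n^*(x)=0$ and $S(x)\in[0,\tfrac{1}{2n})$; on $[b_k,b_{k+1})$ we have $S_n^*(x)=\tfrac kn$ and $S(x)\in[\tfrac{2k-1}{2n},\tfrac{2k+1}{2n})$, so $S(x)-S_n^*(x)\in[-\tfrac1{2n},\tfrac1{2n})$; and on $[b_n,r_1]$ we have $S_n^*(x)=1$ and $S(x)\in[1-\tfrac1{2n},1]$. In every case $|S_n^*(x)-S(x)|\le\tfrac1{2n}<\epsilon$, which is exactly the claimed bound, and matches the picture in Figure~\ref{fig:steps}.

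The only point requiring care is the bookkeeping of the half-open intervals at the jump points (where $s$ evaluates to $1$, putting $b_j$ in the upper step) and at the two endpoints, together with choosing $n$ so that $\tfrac1{2n}<\epsilon$ strictly; beyond that there is no real obstacle in this lemma. The genuine work of the universality argument is deferred to the companion lemma, which must replace these hard step functions by scaled sigmoids $\sigma\!\big((x-b_j)/\tau_j\big)$ with suitably small temperatures $\tau_j$ without destroying the uniform bound, and then to the subsequent propositions, which must let the parameters depend on $x_{1:t-1}$ through neural conditioners and turn uniform approximation of inverse CDFs into convergence in distribution.
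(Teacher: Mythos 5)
Your proof is correct and takes essentially the same approach as the paper's: both build $S_n^*$ as a staircase whose jump locations are $S^{-1}$ of an equispaced grid on $[0,1]$ and whose weights sum to one, then bound the error by the mesh size. The only (immaterial) difference is that you place the jumps at preimages of interval midpoints with uniform weights $w_j=\tfrac{1}{n}$, yielding error $\tfrac{1}{2n}$, whereas the paper uses grid endpoints with weights $w_j=t_j-t_{j-1}$ and gets $\tfrac{1}{n+1}$.
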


\begin{proof} $\textnormal{(Lemma~\ref{lemma:stepfunc})}$

For brevity, we write $s_j(x)=s(x-b_j)$.
For any $\epsilon>0$, we choose $n=\lceil\frac{1}{\epsilon}\rceil$, and divide the range $(0,1)$ into $n+1$ evenly spaced intervals: $(0,y_1)$, $(y_1,y_2)$, $...$, $(y_n,1)$.
For each $y_j$, there is a corresponding inverse value since $S$ is strictly monotonic, $x_j=S^{-1}(y_j)$.
We want to set $S_n^*(x_j)=y_j$ for $1\leq j\leq n-1$ and $S_n^*(x_n)=1$.
To do so, we set the bias terms $b_j$ to be $x_j$.
Then we just need to solve a system of $n$ linear equations $\sum_{j'=1}^nw_{j'}\cdot s_{j'}(x_{j})=t_j$, where $t_j=y_j$ for $1\leq j<n$, $t_0=0$ and $t_n=1$. 
We can express this system of equations in the matrix form as $\mathbf{S}\mathbf{w}=\mathbf{t}$, where:
\vspace{-2pt}
\begin{align}
& \mathbf{S}_{j,{j'}}=s_{j'}(x_j)=\delta_{x_j\geq b_{j'}}=\delta_{j\geq {j'}} \,, \nonumber \\  
& \mathbf{w}_j=w_j \,, \quad \mathbf{t}_j=t_j \nonumber
\end{align}
where $\delta_{\omega\geq\eta}=1$ whenever $\omega\geq\eta$ and $\delta_{\omega\geq\eta}=0$ otherwise. 
Then we have $\mathbf{w}=\mathbf{S}^{-1}\mathbf{t}$. 
Note that ${\bf S}$ is a lower triangular matrix, and its inverse takes the form of a Jordan matrix: $(\mathbf{S}^{-1})_{i,i}=1$ and $(\mathbf{S}^{-1})_{i+1,i}=-1$. 
Additionally, $t_j-t_{j-1}=\frac{1}{n+1}$ for $j=1,...,n-1$ and is equal to $\frac{2}{n+2}$ for $j=n$.
We then have $S_n^*(x)=\mathbf{t}^T\mathbf{S}^{-T}\mathbf{s}(x)$, where $\mathbf{s}(x)_j=s_j(x)$; thus
\begin{align}
|S_n^*(x)-S(x)|&=&&|\sum_{j=1}^{n}s_j(x)(t_j-t_{j-1})-S(x)| \nonumber\\
&=&&|\frac{1}{n+1}\sum_{j=1}^{n-1}s_j(x) + \frac{2s_n(x)}{n+1} - S(x)| \nonumber\\
&=&& |\frac{C_{y_{1:n-1}}(x)}{n+1}+\frac{2\delta_{x\geq y_n}}{n+1} - S(x) |\nonumber\\
&\leq&&\frac{1}{n+1}<\frac{1}{\lceil\vfrac{1}{\epsilon}\rceil}\leq \epsilon
\end{align}
where $C_v(z)=\sum_k\delta_{z\geq v_k}$ is the count of elements in a vector that $z$ is no smaller than. 
\end{proof}

Note that the additional constraint that $\mathbf{w}$ lies on an $n-1$ dimensional simplex is always satisfied, because 
\begin{align}
\sum_jw_j=\sum_{j}t_j-t_{j-1}=t_n-t_0=1 \nonumber
\end{align}
See Figure \ref{fig:steps} for a visual illustration of the proof. 
Using this result, we now turn to the case of using sigmoid functions instead of step functions.

\begin{lemma}
\label{lemma:dsf}
\textnormal{(Superimposed sigmoids universally approximate monotonic functions)}
Define:
\begin{align}
S_n(x) = \sum_{j=1}^{n}w_j\cdot\sigma\left(\frac{x-b_j}{\tau_j}\right)   \nonumber
\end{align}

With the same constraints and definition in Lemma~\ref{lemma:stepfunc}, given any $\epsilon>0$, there exists a positive integer $n$, real constants $w_j$, $\tau_j$ and $b_j$ for $j=1,...,n$, where additionally $\tau_j$ are bounded and positive, such that 
\linebreak $|S_n(x)-S(x)|<\epsilon \quad\forall x\in(r_0,r_1)$.
\label{prop:sft}
\end{lemma}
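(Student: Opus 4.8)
The plan is to derive Lemma~\ref{lemma:dsf} from Lemma~\ref{lemma:stepfunc} by a two-stage approximation: first approximate $S$ by a superposition of step functions $S_n^*$, and then approximate each step by a steep sigmoid. Given $\epsilon>0$, I would first invoke Lemma~\ref{lemma:stepfunc} (with tolerance $\epsilon/2$, and taking $n$ as large as needed) to obtain weights $w_j>0$ with $\sum_j w_j=1$ and biases $b_j=S^{-1}(y_j)\in[r_0,r_1]$ such that $|S_n^*(x)-S(x)|<\epsilon/2$ on $[r_0,r_1]$. Since $S$ is strictly monotonic the $y_j$, and hence the $b_j$, are distinct; moreover the construction in Lemma~\ref{lemma:stepfunc} gives $w_j\le \tfrac{2}{n+1}$, so by enlarging $n$ we may additionally assume $w_j<\epsilon/4$ for all $j$. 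I then set all scales equal, $\tau_j=\tau$, and it remains to choose $\tau>0$ small enough that $|S_n(x)-S_n^*(x)|<\epsilon/2$ for all $x\in(r_0,r_1)$; the triangle inequality then yields $|S_n(x)-S(x)|<\epsilon$.

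For the second stage I would use two elementary facts about $\sigma$. First, $|\sigma(z/\tau)-s(z)|\le \tfrac12$ for every $z\in\R$ and every $\tau>0$: when $z<0$ we have $\sigma(z/\tau)\in(0,\tfrac12)$ and $s(z)=0$, while when $z\ge0$ we have $\sigma(z/\tau)\in[\tfrac12,1)$ and $s(z)=1$. Second, away from the jump the error decays exponentially: if $|z|\ge\delta$ then $|\sigma(z/\tau)-s(z)|=\bigl(1+e^{|z|/\tau}\bigr)^{-1}\le e^{-\delta/\tau}$. Now set $\rho=\min_{j\ne k}|b_j-b_k|>0$ and $\delta=\rho/2$, so that any $x$ lies within $\delta$ of at most one bias, and pick $\tau$ small enough that $e^{-\delta/\tau}<\epsilon/4$. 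For an arbitrary $x\in(r_0,r_1)$, splitting $S_n(x)-S_n^*(x)=\sum_j w_j\bigl(\sigma((x-b_j)/\tau)-s(x-b_j)\bigr)$ into the at-most-one index $j_0$ with $|x-b_{j_0}|<\delta$ and the remaining indices gives $|S_n(x)-S_n^*(x)|\le w_{j_0}\cdot\tfrac12+\sum_{j\ne j_0}w_j\,e^{-\delta/\tau}<\epsilon/4+\epsilon/4=\epsilon/2$; if no such $j_0$ exists the bound only improves. A visual of this interception of each step by a steep sigmoid is exactly Figure~\ref{fig:steps}.

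The main obstacle, and essentially the only place needing care, is that $\sigma(\cdot/\tau)$ does not converge uniformly to the step function near its jump point, so no single $\tau$ makes all the sigmoid summands uniformly close to their steps. The remedy is the observation above that this defect is localized around a single $b_{j_0}$ at a time and is weighted by $w_{j_0}$, which the Lemma~\ref{lemma:stepfunc} construction already forces to be small once $n$ is large; everywhere else the exponential decay in $1/\tau$ controls the error. All the parameter constraints in the statement — $w_j>0$, $\sum_j w_j=1$, $b_j\in[r_0,r_1]$, and $\tau_j$ positive and bounded — are inherited from Lemma~\ref{lemma:stepfunc} or trivially met by taking the $\tau_j$ equal to a common small constant, and replacing the closed interval by the open one $(r_0,r_1)$ in the conclusion is immaterial since every bound above is uniform in $x$.
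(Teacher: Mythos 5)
Your proof is correct and follows the same two-stage plan as the paper's: invoke Lemma~\ref{lemma:stepfunc} to obtain $S_n^*$ with the same $w_j,b_j$, take a single common scale $\tau$ calibrated to the minimum gap between the biases, and close with the triangle inequality. Where you genuinely differ is in how the uniform bound on $|S_n(x)-S_n^*(x)|$ is established. The paper controls $S_n$ only at the knots: it shows $S_n(b_j)$ lands within $\epsilon_0$ of the half-step values $\Gamma_j\cdot w$ and then appeals to monotonicity of both $S_n$ and $S_n^*$ to argue that a monotone curve intercepting every segment of the staircase must stay within one step-height, roughly $2/(n+1)$, of it everywhere --- a step the paper states rather informally (``Then $S_n$ intercepts with all segments of $S_n^*$\dots''). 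You instead bound the difference pointwise by splitting the sum into the at most one index $j_0$ with $|x-b_{j_0}|<\rho/2$, whose error is at most $\tfrac12 w_{j_0}$ and is killed by taking $n$ large enough that every weight is below $\epsilon/4$, and the remaining indices, whose total error is at most $e^{-\delta/\tau}$. This makes explicit the observation the paper leaves implicit --- that the non-uniform convergence of $\sigma(\cdot/\tau)$ to the step function near its jump is harmless only because each jump carries weight $O(1/n)$ --- and yields a fully quantitative estimate with no geometric interception argument. Both proofs ultimately use the same two resources ($n$ large so the weights are small, $\tau$ small relative to $\min_{j\neq j'}|b_j-b_{j'}|$), so nothing is lost; your version is more elementary and, if anything, tightens the spot where the paper's write-up is loosest.
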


\begin{proof} $\textnormal{(Lemma~\ref{lemma:dsf})}$

Let $\epsilon_1=\frac{1}{3}\epsilon$ and $\epsilon_2=\frac{2}{3}\epsilon$. 
We know that for this $\epsilon_1$, there exists an $n$ such that $\left|S_n^*-S\right|<\epsilon_1$.

We chose the same $w_j$, $b_j$ for $j=1,...,n$ as the ones used in the proof of Lemma~\ref{lemma:stepfunc}, and let $\tau_1,...,\tau_n$ all be the same value denoted by $\tau$.

Take $\kappa=\min_{j\neq j'} |b_j-b_{j'}|$ and $\tau=\frac{\kappa}{\sigma^{-1}(1-\epsilon_0)}$ for some $\epsilon_0>0$. Take $\Gamma$ to be a lower triangular matrix with values of 0.5 on the diagonal and 1 below the diagonal.
\begin{align*}
&\max_{j=1,...,n} \left| S_n(b_j)-\Gamma_{j}\cdot w \right| \\
&\quad=\max \left| \sum_{j'}w_{j'}\sigma\left(\frac{b_j-b_{j'}}{\tau}\right) - \sum_{j'}w_{j'}\Gamma_{jj'} \right| \\
&\quad=\max \left| \sum_{j'} w_{j'} \left( \sigma\left(\frac{b_j-b_{j'}}{\tau}\right) - \Gamma_{jj'} \right) \right| \\
&\quad<\max \sum_{j'} w_{j'} \epsilon_0 = \epsilon_0
\end{align*}
The inequality is due to 
\begin{align*}
\sigma\left(\frac{b_j-b_{j'}}{\gamma}\right) &= \sigma\left(\frac{b_j-b_{j'}}{\min_{k\neq k'}b_k-b_{k'}}\sigma^{-1}(1-\epsilon_0)\right) \\
&\left\{ \begin{aligned}
&=0.5 && \text{if } j=j' \\
&\geq1-\epsilon_0 && \text{if } j>j' \\
&\leq\epsilon_0 && \text{if } j<j'
\end{aligned} \right.
\end{align*}
Since the product $\Gamma\cdot w$ represents the half step points of $S_n^*$ at $x=b_j$'s, the result above entails $\left|S_n(x)-S_n^*(x)\right|<\epsilon_2=2\epsilon_1$ for all $x$.
To see this, we choose $\epsilon_0=\frac{1}{2(n+1)}$.
Then $S_n$ intercepts with all segments of $S_n^*$ except for the ends.
We choose $\epsilon_2=2\epsilon_1$ since the last step of $S_n^*$ is of size $\frac{2}{n+1}$, and thus the bound also holds true in the vicinity where $S_n$ intercepts with the last step of $S_n^*$.

Hence,
\begin{align*}
&\left| S_n(x)-S(x) \right| \\
&\qquad\leq \left| S_n(x) - S_n^*(x) \right| + \left| S_n^*(x) - S(x) \right|<\epsilon_1+\epsilon_2=\epsilon
\end{align*}

\end{proof}

Now we show that (the pre-logit) DSF (Equation~\ref{eq:prelogit-dsf}) can universally approximate monotonic functions.
We do this by showing that the well-known universal function approximation properties of neural networks \citep{cybenko1989approximation} allow us to produce parameters which are sufficiently close to those required by Lemma 2.

\begin{lemma}
\label{lemma:dsf_composed}
Let $x_{1:m}\in[r_0,r_1]^{m}$ where $r_0,r_1\in\R$.
Given any $\epsilon>0$ and any multivariate continuously differentiable function \footnote{ $S(\cdot): [r_0,r_1]^{m}\rightarrow [0,1]^{m}$ is a multivariate-multivariable function, where $S(\cdot)_t$ is its $t$-th component, which is a univariate-multivariable function, written as $S_t(\cdot,\cdot):[r_0,r_1]\times[r_0,r_1]^{t-1}\rightarrow[0,1]$.} 
$S(x_{1:m})_t=S_t(x_t,x_{1:t-1})$ for $t\in[1,m]$ that is strictly monotonic with respect to the first argument when the second argument is fixed, where the boundary values are $S_t(r_0,x_{1:t-1})=0$ and $S_t(r_1,x_{1:t-1})=1$ for all $x_{1:t-1}$ and $t$, then there exists a multivariate function $\SSS$ such that $\|\SSS(x_{1:m})-S(x_{1:m})\|_{\infty}<\epsilon$ for all $x_{1:m}$, of the following form:
\begin{align}
\SSS(x_{1:m})_t &= \SSS_t(x_t,\CCC_t(x_{1:t-1})) \nonumber \\
&= \sum_{j=1}^{n}w_{tj}(x_{1:t-1})\cdot\sigma\left(\frac{x_t-b_{tj}(x_{1:t-1})}{\tau_{tj}(x_{1:t-1})}\right) \nonumber
\end{align}
where $t \in [1, m]$, and $\mathcal{C}_t=(w_{tj},b_{tj},\tau_{tj})_{j=1}^n$ are functions of $x_{1:1-t}$ parameterized by neural networks, with $\tau_{tj}$ bounded and positive, $b_{tj} \in [r_0,r_1]$, $\sum_{j=1}^nw_{tj}=1$, and $w_{tj}>0$

\end{lemma}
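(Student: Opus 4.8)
The plan is to promote Lemma~\ref{lemma:dsf} from the unconditional, univariate setting to the conditional, multivariate setting in two stages: first exhibit a \emph{continuous} choice of the sigmoid‑superposition parameters as a function of the conditioning vector $x_{1:t-1}$, and then replace those continuous parameter maps by neural networks, bounding the extra error with a perturbation argument on the transformer.

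First I would fix a coordinate $t$ and, for each fixed value $x_{1:t-1}\in[r_0,r_1]^{t-1}$, run the construction inside the proof of Lemma~\ref{lemma:dsf} on the univariate map $x_t\mapsto S_t(x_t,x_{1:t-1})$, which by hypothesis is continuous, strictly increasing, and has boundary values $S_t(r_0,x_{1:t-1})=0$, $S_t(r_1,x_{1:t-1})=1$. The crucial point is that the number of hidden units $n$ and the weights $w_{j}=t_j-t_{j-1}$ produced by that construction depend only on the target accuracy (take it to be $\tfrac{\epsilon}{2}$), not on the particular monotone function, so a single $n$ and a single constant weight vector serve every $x_{1:t-1}$. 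The knots are $\bar b_{tj}(x_{1:t-1})=S_t^{-1}\!\big(\tfrac{j}{n+1};x_{1:t-1}\big)$, the inverse of $S_t$ in its first argument; since $S_t$ is jointly continuous and strictly increasing in $x_t$, this partial inverse is jointly continuous by a routine compactness argument, lies strictly inside $(r_0,r_1)$, and preserves the ordering $\bar b_{t1}<\cdots<\bar b_{tn}$. Hence $\kappa(x_{1:t-1})=\min_{j\ne j'}|\bar b_{tj}-\bar b_{tj'}|$ is continuous and strictly positive on the compact set $[r_0,r_1]^{t-1}$, so it is bounded below by some $\kappa_\ast>0$; consequently the common scale $\bar\tau_{tj}(x_{1:t-1})=\kappa(x_{1:t-1})/\sigma^{-1}(1-\epsilon_0)$ (with $\epsilon_0=\tfrac{1}{2(n+1)}$ as in the proof of Lemma~\ref{lemma:dsf}) is continuous and takes values in a fixed compact subinterval of $(0,\infty)$. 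Writing $\bar\CCC_t=(w_j,\bar b_{tj},\bar\tau_{tj})_{j=1}^n$ for this continuous family, Lemma~\ref{lemma:dsf} gives $\big|S_n(x_t;\bar\CCC_t(x_{1:t-1}))-S_t(x_t,x_{1:t-1})\big|<\tfrac{\epsilon}{2}$ for all $x_t\in(r_0,r_1)$ and all $x_{1:t-1}$.

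Next I would approximate the continuous maps $x_{1:t-1}\mapsto\bar b_{tj}(x_{1:t-1})$ and $x_{1:t-1}\mapsto\bar\tau_{tj}(x_{1:t-1})$ on the compact domain $[r_0,r_1]^{t-1}$ by neural networks, using the universal approximation theorem \citep{cybenko1989approximation} invoked just above; the $w_j$ are already constant. The required constraints ($\sum_j w_{tj}=1$, $w_{tj}>0$, $\tau_{tj}$ bounded and positive, $b_{tj}\in[r_0,r_1]$) are imposed for free by composing the network outputs with softmax, softplus, and scaled‑sigmoid output nonlinearities, since the targets satisfy them with room to spare ($\bar\tau_{tj}$ bounded away from $0$ and $\infty$, $\bar b_{tj}$ interior). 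Call the network‑parameterized family $\CCC_t$, so $\SSS(x_{1:m})_t=\SSS_t(x_t,\CCC_t(x_{1:t-1}))$ has exactly the stated form. Splitting the error,
\begin{align}
\big|\SSS(x_{1:m})_t-S(x_{1:m})_t\big|
&\le \big|\SSS_t(x_t,\CCC_t(x_{1:t-1}))-S_n(x_t;\bar\CCC_t(x_{1:t-1}))\big| \nonumber\\
&\quad+\big|S_n(x_t;\bar\CCC_t(x_{1:t-1}))-S_t(x_t,x_{1:t-1})\big|, \nonumber
\end{align}
the second term is $<\tfrac{\epsilon}{2}$ by the previous paragraph. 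For the first term, $\SSS_t(x_t,\cdot)$ and $S_n(x_t;\cdot)$ are the same map of the parameters; on the compact region where $x_t,b,\widehat b\in[r_0,r_1]$ and $\tau,\widehat\tau$ are bounded away from $0$, the map $(b,\tau)\mapsto\sigma\big(\tfrac{x_t-b}{\tau}\big)$ is Lipschitz uniformly in $x_t$, so a sufficiently accurate network approximation of $(\bar b_{tj},\bar\tau_{tj})$ makes the first term $<\tfrac{\epsilon}{2}$, uniformly over $x_t$ and $x_{1:t-1}$. Running this for every $t\in[1,m]$ and taking the supremum over $x_{1:m}$ and the maximum over $t$ yields $\|\SSS(x_{1:m})-S(x_{1:m})\|_\infty<\epsilon$.

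The step I expect to be the main obstacle is securing uniformity. Two places need care: that one fixed $n$ and one well‑behaved continuous parameter family work simultaneously for \emph{all} conditioning values $x_{1:t-1}$ (handled by noting $n$ and $w_j$ in Lemma~\ref{lemma:dsf} depend only on $\epsilon$, and that the partial inverse $S_t^{-1}$ is jointly continuous); and that the network error in $\bar\tau_{tj}$ does not blow up when pushed through the sigmoid, since $\tau\mapsto\sigma\big(\tfrac{x_t-b}{\tau}\big)$ becomes arbitrarily steep as $\tau\to0$. The argument genuinely relies on the lower bound $\bar\tau_{tj}\ge\kappa_\ast/\sigma^{-1}(1-\epsilon_0)>0$, which is where compactness of $[r_0,r_1]^{t-1}$ together with strict monotonicity of $S_t$ (guaranteeing distinct knots $\bar b_{tj}$) enters.
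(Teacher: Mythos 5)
Your proposal is correct and follows essentially the same route as the paper's proof: apply Lemma~\ref{lemma:dsf} pointwise in $x_{1:t-1}$ to obtain continuous parameter maps with a single $n$ depending only on the accuracy, approximate those maps by neural networks via \citet{cybenko1989approximation}, control the resulting perturbation of the sigmoid superposition using the fact that $\tau$ stays bounded away from zero, and finish with the triangle inequality and a maximum over $t$. Your write-up is in fact somewhat more explicit than the paper's about why the knot spacing (and hence $\tau$) admits a uniform positive lower bound via compactness, a point the paper only asserts in passing.
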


\begin{proof} $\textnormal{(Lemma~\ref{lemma:dsf_composed})}$

First we deal with the univariate case (for any t) and drop the subscript $t$ of the functions. 
We write $\SSS_n$ and $\CCC_k$ to denote the sequences of univariate functions. 
We want to show that for any $\epsilon>0$, there exist (1) a sequence of functions $\SSS_n(x_t,\CCC_k(x_{1:t-1}))$ in the given form, and (2) large enough $N$ and $K$ such that when $n\geq N$ and $k\geq K$, $|\SSS_n(x_t,\CCC_k(x_{1:t-1}))-S(x_t,x_{1:t-1})|\leq\epsilon$ for all $x_{1:t}\in[r_0,r_1]^t$.

The idea is first to show that we can find a sequence of parameters, $\CCC_{n}(x_{1:t-1})$, that yield a good approximation of the target function, $S(x_t,x_{1:t-1})$.
We then show that these parameters can be arbitrarily well approximated by the outputs of a neural network, $\CCC_{k}(x_{1:t-1})$, which in turn yield a good approximation of $S$.

From Lemma \ref{lemma:dsf}, we know that such a sequence $\CCC_{n}(x_{1:t-1})$ exists, and furthermore that we can, for any $\epsilon$, and independently of $S$ and $x_{1:t-1}$ choose an $N$ large enough so that:
\begin{align}
|\SSS_n(x_t,\CCC_{n}(x_{1:t-1})) - S(x_t,x_{1:t-1})|<\frac{\epsilon}{2} \label{eq:l3firstep}
\end{align}

To see that we can further approximate a given $\CCC_{n}(x_{1:t-1})$ well by $\CCC_{k}(x_{1:t-1})$ \footnote{ Note that $\CCC_n$ is a chosen function (we are not assuming its parameterization; i.e. not necessarily a neural network) that we seek to approximate using $\CCC_k$, which is the output of a neural network.}, we apply the classic result of \citet{cybenko1989approximation}, which states that a multilayer perceptron can approximate any continuous function on a compact subset of $\mathbb{R}^m$.
Note that specification of $\CCC_{n}(x_{1:t-1})=(w_{tj},b_{tj},\tau_{tj})_{j=1}^n$ in Lemma \ref{lemma:dsf} depends on the quantiles of $S_t(x_t, \cdot)$ as a function of $x_{1:t-1}$; since the quantiles are continuous functions of $x_{1:t-1}$, so is $\CCC_{n}(x_{1:t-1})$, and the theorem applies.

Now, $\SSS$ has bounded derivative wrt $\CCC$, and is thus uniformly continuous, so long as $\tau$ is greater than some positive constant, which is always the case for any fixed $\CCC_n$, and thus can be guaranteed for $\CCC_k$ as well (for large enough $k$).
Uniform continuity allows us into translate the convergence of $\CCC_k \rightarrow \CCC_n$ to convergence of $\SSS_n(x_t,\CCC_{k}(x_{1:t-1})) \rightarrow \SSS_n(x_t,\CCC_{n}(x_{1:t-1}))$, since for any $\epsilon$, there exists a $\delta > 0$ such that 
\begin{align}
&\| \CCC_k(x_{1:t-1})-\CCC_{n}(x_{1:t-1}) \|_{\infty}<\delta \nonumber \\
&\implies
\left| \SSS_n(x_t,\CCC_k(x_{1:t-1})) - \SSS_n(x_t,\CCC_{n}(x_{1:t-1})) \right|<\frac{\epsilon}{2}
\end{align}

Combining this with Equation \ref{eq:l3firstep}, we have for all $x_t$ and $x_{1:t-1}$, and for all $n\geq N$ and $k\geq K$ 
\begin{align*}
&\left|\SSS_n\left(x_t,\CCC_k(x_{1:t-1})\right)-S(x_t,x_{x_{1:t-1}})\right| \\
&\qquad\quad\leq\left|\SSS_n\left(x_t,\CCC_k(x_{1:t-1})\right) - \SSS_n(x_t,\CCC_{n}(x_{1:t-1}))\right| +  \\
&\qquad\qquad\,\left|\SSS_n(x_t,\CCC_{n}(x_{1:t-1})) - S(x_t,x_{x_{1:t-1}})\right| \\
&\qquad\quad<\frac{\epsilon}{2}+\frac{\epsilon}{2}=\epsilon
\end{align*}

Having proved the univariate case, we add back the subscript $t$ to denote the index of the function. 
From the above, we know that given any $\epsilon>0$ for each $t$, there exist $N(t)$ and a sequence of univariate functions $\SSS_{n,t}$ such that for all $n\geq N(t)$, $|\SSS_{n,t}-S_t|<\epsilon$ for all $x_{1:t}$. 
Choosing $N=\max_{t\in[1,m]} N(t)$, we have that there exists a sequence of multivariate functions $\SSS_n$ in the given form such that for all $n\geq N$, $\|\SSS_n-S\|_{\infty}<\epsilon$ for all $x_{1:m}$. 

\end{proof}

\section{Proof of Universal Approximation of DSF}
\label{sec:main_proof}

\begin{lemma}
\label{lemma:conv_dist}
Let $X\in\mathcal{X}$ be a random variable, and $\mathcal{X}\subseteq\R^m$ and $\mathcal{Y}\subseteq\R^m$. 
Given any function $J:\mathcal{X}\rightarrow\mathcal{Y}$ and a sequences of functions $J_n$ that converges pointwise to $J$, the sequence of random variables induced by the transformations $Y_n\doteq J_n(X)$ converges in distribution to $Y\doteq J(X)$. 
\end{lemma}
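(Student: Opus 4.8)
The plan is to reduce convergence in distribution to a single application of the bounded convergence theorem, using the fact that pointwise convergence of the maps $J_n$ immediately upgrades to almost sure convergence of the transformed variables $J_n(X)\to J(X)$.

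First I would fix an arbitrary bounded continuous test function $g:\mathcal{Y}\to\R$ and examine the sequence $g(J_n(X))$. Since $J_n\to J$ pointwise, for every point $x$ in the range of $X$ we have $J_n(x)\to J(x)$ in $\R^m$, and continuity of $g$ then gives $g(J_n(x))\to g(J(x))$. Hence $g\circ J_n\to g\circ J$ pointwise on $\mathcal{X}$, so in particular $g(J_n(X))\to g(J(X))$ almost surely (indeed surely).

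Next I would dominate and pass to the limit: $|g(J_n(X))|\le\|g\|_\infty<\infty$, and the constant $\|g\|_\infty$ is integrable with respect to any probability measure, so the dominated (bounded) convergence theorem yields $\E[g(J_n(X))]\to\E[g(J(X))]$, i.e.\ $\E[g(Y_n)]\to\E[g(Y)]$. As $g$ was an arbitrary bounded continuous function, the portmanteau characterization gives exactly $Y_n\to Y$ in distribution.

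I do not expect a genuine obstacle here; the only bookkeeping is measurability, which makes the statement well posed: each $J_n$ is taken to be measurable (so $Y_n$ is a random vector), and $J$, as a pointwise limit of measurable functions, is measurable (so $Y$ is a random vector and the compositions $g\circ J_n$, $g\circ J$ are integrable). The one mild subtlety is that ``pointwise convergence'' must be read as convergence at every point of $\mathcal{X}$ — which is what the lemma hands us — so that the convergence of $J_n(X)$ holds surely rather than merely on a set of full measure; either reading suffices. An essentially equivalent alternative would be to invoke directly the standard fact that almost sure convergence implies convergence in distribution, applied to $Y_n=J_n(X)\to J(X)=Y$.
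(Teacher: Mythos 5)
Your proof is correct and follows essentially the same route as the paper's: fix a bounded continuous test function, use pointwise convergence plus continuity to get pointwise convergence of the compositions, apply dominated convergence to the expectations, and conclude by Portmanteau. Your added remarks on measurability and the sure (rather than merely almost sure) nature of the convergence are sound but do not change the argument.
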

\begin{proof} \textnormal{(Lemma 4)}

Let $h$ be any bounded, continuous function on $\R^m$, so that $h\circ J_n$ converges pointwise to $h\circ J$ by continuity of $h$.
Since $h$ is bounded, then by the {\bf dominated convergence theorem}, $\mathbb{E}[h(Y_n)]=\mathbb{E}[h(J_n(X))]$ converges to $\mathbb{E}[h(J(X)]=\mathbb{E}[h(Y)]$.
As this result holds for any bounded continuous function $h$, by the {\bf Portmanteau's lemma}, we have $Y_n\xrightarrow{d}Y$.
\end{proof}

\begin{proof} $\textnormal{(Proposition~\ref{theorem:unstruct2struct})}$

Given an arbitrary ordering, let $F$ be the CDFs of $Y$, defined as $F_t(y_t,x_{1:t-1}) = \Pr(Y_t \leq y_t|x_{1:t-1})$.
According to Theorem 1 of~\citet{hyvarinen1999nonlinear}, $F(Y)$ is uniformly distributed in the cube $[0,1]^m$.
$F$ has an upper triangular Jacobian matrix, whose diagonal entries are conditional densities which are positive by assumption.
Let $G$ be a multivariate and multivariable function where $G_t$ is the inverse of the CDF of $Y_t$: $G_t(F_t(y_t,x_{1:t-1}),x_{1:t-1})=y_t$.

According to Lemma~\ref{lemma:dsf_composed}, there exists a sequence of functions in the given form $(\SSS_n)_{n\geq1}$ that converge uniformly to $\sigma\circ G$. 
Since uniform convergence implies pointwise convergence, $G_n=\sigma^{-1}\circ S_n$ converges pointwise to $G$, by continuity of $\sigma^{-1}$. 
Since $G_n$ converges pointwise to $G$ and $G(X)=Y$, by Lemma~\ref{lemma:conv_dist}, we have $Y_n\xrightarrow{d}Y$

\end{proof}
\vspace{-20pt}
\begin{proof}
$\textnormal{(Proposition~\ref{theorem:struct2unstruct})}$

Given an arbitrary ordering, let $H$ be the CDFs of $X$:
\begin{align*}
y_1 &\doteq H_1(x_1,\emptyset) =F_1(x_1,\emptyset)=\textnormal{Pr}(X_1\leq x_1|\emptyset)\\
y_t &\doteq H_t(x_t, x_{1:t-1}) \\
&=F_t\big(x_t,\{H_{t-t'}(x_{t-t'},x_{1:t-t'-1})\}_{t'=1}^{t-1}\big)\\
&=\textnormal{Pr}(X_t\leq x_t|y_{1:{t-1}}) \quad\textnormal{ for }2\leq t\leq m
\end{align*}
Due to~\citet{hyvarinen1999nonlinear}, $y_1,...y_m$ are independently and uniformly distributed in $(0,1)^m$. 

According to Lemma~\ref{lemma:dsf_composed}, there exists a sequence of functions in the given form $(\SSS_n)_{n\geq1}$ that converge uniformly to $H$. 
Since $H_n=\SSS_n$ converges pointwise to $H$ and $H(X)=Y$, by Lemma~\ref{lemma:conv_dist}, we have $Y_n\xrightarrow{d}Y$

\end{proof}
\vspace{-20pt}
\begin{proof}
$\textnormal{(Theorem~\ref{theorem:struct2struct})}$

Given an arbitrary ordering, let $H$ be the CDFs of $X$ defined the same way in the proof for Proposition~\ref{theorem:struct2unstruct}, and let $G$ be the inverse of the CDFs of $Y$ defined the same way in the proof for Proposition~\ref{theorem:unstruct2struct}. 
Due to \citet{hyvarinen1999nonlinear}, $H(X)$ is uniformly distributed in $(0,1)^m$, so $G(H(X))=Y$.
Since $H_t(x_t, x_{1:t-1})$ is monotonic wrt $x_t$ given $x_{1:t-1}$, and $G_t(H_t, H_{1:t-1})$ is monotonic wrt $H_t$ given $H_{1:t-1}$, $G_t$ is also monotonic wrt $x_t$ given $x_{1:t-1}$, as 
$$ \frac{\partial G_t(H_t,H_{1:t-1})}{\partial x_t} =
\frac{\partial G_t(H_t,H_{1:t-1})}{\partial H_t} 
\frac{\partial H_t(x_t,x_{1:t-1})}{\partial x_t} $$
is always positive. 

According to Lemma~\ref{lemma:dsf_composed}, there exists a sequence of functions in the given form $(\SSS_n)_{n\geq1}$ that converge uniformly to $\sigma\circ G\circ H$. 
Since uniform convergence implies pointwise convergence, $K_n=\sigma^{-1}\circ S_n$ converges pointwise to $G\circ H$, by continuity of $\sigma^{-1}$. 
Since $K_n$ converges pointwise to $G\circ H$ and $G(H(X))=Y$, by Lemma~\ref{lemma:conv_dist}, we have $Y_n\xrightarrow{d}Y$

\end{proof}
\vspace{-20pt}
\section{Experimental Details}
For the experiment of amortized variational inference, we implement the Variational Autoencoder~\cite{kingma2013auto}.
Specifically, we follow the architecture used in \citet{kingma2016improved}: the encoder has three layers with $[16,32,32]$ feature maps. 
We use resnet blocks \cite{he2016deep} with $3\times3$ convolution filters and a stride of $2$ to downsize the feature maps. 
The convolution layers are followed by a fully connected layer of size $450$ as a context for the flow layers that transform the noise sampled from a standard normal distribution of dimension $32$.
The decoder is symmetrical with the encoder, with the strided convolution replaced by a combination of bilinear upsampling and regular resnet convolution to double the feature map size. 
We used the ELUs activation function~\cite{clevert2015fast} and weight normalization~\cite{salimans2016weight} in the encoder and decoder.
In terms of optimization, Adam~\cite{Kingma2015} is used with learning rate fined tuned for each inference setting, and Polyak averaging~\cite{polyak1992acceleration} was used for evaluation with $\alpha=0.998$ which stands for the proportion of the past at each time step. 
We also consider a variant of Adam known as Amsgrad~\cite{reddi2018convergence} as a hyperparameter. 
For vanilla VAE, we simply apply a resnet dot product with the context vector to output the mean and the pre-softplus standard deviation, and transform each dimension of the noise vector independently. 
We call this linear flow. 
For IAF-affine and IAF-DSF, we employ MADE~\cite{MADE} as the conditioner $c(x_{1:t-1})$, and we apply dot product on the context vector to output a scale vector and a bias vector to conditionally rescale and shift the preactivation of each layer of the MADE. 
Each MADE has one hidden layer with $1920$ hidden units. 
The IAF experiments all start with a linear flow layer followed by IAF-affine or IAF-DSF transformations. 
For DSF, we choose $d=16$.

For the experiment of density estimation with MAF, we followed the implementation of~\citet{papamakarios2017masked}.
Specifically for each dataset, we experimented with both $5$ and $10$ flow layers, followed by one linear flow layer. 
The following table specifies the number of hidden layers and the number of hidden units per hidden layer for MADE:
\begin{table}[h]
\caption{
Architecture specification of MADE in the MAF experiment. Number of hidden layers and number of hidden units.
}
\label{tb:maf_spec}
\centering
\begin{tabular}{ccccc}
\toprule
{\scriptsize POWER} & 
{\scriptsize GAS} & 
{\scriptsize HEPMASS} & 
{\scriptsize MINIBOONE} & 
{\scriptsize BSDS300} \\
\midrule
${2\times100}$ & 
${2\times100}$ &
${2\times512}$ &
${1\times512}$ &
${2\times1024}$
\\
\bottomrule
\end{tabular}
\end{table}

\end{document}